\documentclass[runningheads]{llncs}
\usepackage[T1]{fontenc}
\usepackage{graphicx}
\usepackage{booktabs}
\usepackage{multirow}
\usepackage{array}
\usepackage{amsmath,amssymb,mathtools}
\usepackage{bm}
\usepackage{algorithm}
\usepackage[noend]{algpseudocode}
\usepackage{xspace}
\usepackage{float}
\usepackage{url}
\usepackage{tabularx}
\usepackage{placeins}
\usepackage[hidelinks,colorlinks=true,linkcolor=blue,citecolor=blue]{hyperref}
\usepackage[nameinlink,noabbrev]{cleveref}
\crefname{figure}{Fig.}{Figs.}
\crefname{equation}{}{}

\newcommand{\method}{AdaReP\xspace}
\newcommand{\MPC}{\mathsf{MPC}}
\newcommand{\OPT}{\mathsf{OPT}}
\newcommand{\cost}{\mathrm{cost}}
\newcommand{\norm}[1]{\left\lVert #1 \right\rVert}

\begin{document}

\title{AdaReP: \underline{Ada}ptive \underline{Re-P}lanning under Model Mismatch for Neural World-Model Predictive Control}
\titlerunning{AdaReP for Neural World-Model Predictive Control}

\author{
Yutian Cheng\inst{1,2}$^{*}$
\and
Xiaojian Ma\inst{2}$^{*}$
\and
Xianhao Wang\inst{2,3}
\and
Min Yang\inst{2,3}
\and \\
Rongpeng Su\inst{2,3}
\and
Hangxin Liu\inst{2}
\and
Xi Chen\inst{2}
\and
Shuai Li\inst{1}$^{+}$
\and
Qing Li\inst{2}$^{+}$
}
\authorrunning{Y. Cheng et al.}

\institute{
Shanghai Jiao Tong University
\and
Beijing Institute for General Artificial Intelligence (BIGAI)
\and
University of Science and Technology of China
}

\begingroup
\renewcommand{\thefootnote}{}
\footnotetext{$^{*}$ These authors contributed equally.}
\footnotetext{$^{+}$ Corresponding authors.}
\footnotetext{Accepted at ICANN 2026.
This arXiv version contains supplementary materials and appendices
that are omitted from the conference version due to space limitations.}
\endgroup

\maketitle

% \maketitle

\begin{abstract}
Neural world models coupled with model predictive control (MPC) replan at every environment step to bound accumulated prediction error, but this incurs substantial computational overhead. Reusing a cached plan reduces this overhead, yet its effectiveness depends on how prediction mismatch propagates through the local dynamics. We analyze this trade-off with a perturbation-based dynamic-regret framework and show that stale-plan penalties scale with the reuse tolerance, the accumulated mismatch since the last replanning step, and the local dynamics sensitivity. Based on this structure, we propose \method, a training-free wrapper that adapts the replanning tolerance online using the current deviation from the cached rollout and a local sensitivity estimate, without modifying the learned world model or planner. Across image-space planning, latent-space control, and real-world robotic manipulation, \method substantially reduces planner-side computation while maintaining comparable task performance, including over 80\% fewer queries on a 50-trial physical robot study.
\keywords{Adaptive replanning \and Neural world models \and Model predictive control \and Neurorobotics \and Online control.}
\end{abstract}

\section{Introduction}
Neural world models provide a predictive interface for control in robotics. Recent systems plan in image space \cite{tian2023vp2}, latent space \cite{hafner2019planet,hafner2019dream,hafner2023dreamerv3,hansen2022temporal,hansentd}, or other learned state spaces \cite{wu2024ivideogpt,zhao2024vlmpc} using large neural networks as predictors. These systems typically replan at every environment step to bound accumulated prediction error, but each replanning call requires many queries to the large world model, resulting in substantial computational overhead, which is further exacerbated by modern large-scale, large-parameter world models. Intuitively, when execution closely tracks the predicted rollout, reusing the cached plan can save computation without sacrificing performance, but continued reuse risks degradation when execution drifts \cite{tabuada2007event} or local dynamics are highly sensitive \cite{heemels2012introduction}.

We therefore study \emph{replanning cadence} as a \textbf{control variable} for neural world-model predictive control. Specifically, we do not modify the learned world model, retrain the policy, or redesign the MPC solver. Instead, given an existing predictive controller, we address the question: at what point during execution should the current plan be refreshed? Using perturbation-style dynamic-regret arguments for predictive control under model mismatch \cite{lin2021perturbation,lin2022bounded}, we show that stale-plan penalties grow with three quantities: the reuse tolerance, the accumulated prediction mismatch since the last replanning step, and the local dynamics sensitivity along the executed trajectory. This structure explains why a single fixed reuse schedule rarely transfers across tasks or even across phases of the same task.

Motivated by this analysis, we develop \method, a training-free replanning layer for neural world-model MPC. \method monitors the current deviation from the cached rollout together with a simple local-sensitivity estimate, converts these signals into a time-varying reuse tolerance, and replans only when the cached plan has become unreliable. The same mechanism wraps image-space planners, latent-space planners, or real-world state-space controllers without modification.

We evaluate \method on VP2/RoboDesk \cite{tian2023vp2}, TD-MPC2 on the 30-task DeepMind Control suite \cite{hansentd,tunyasuvunakool2020}, and real-world Franka manipulation. Across all three settings, \method reduces planner-side model queries while maintaining comparable task performance. On Franka, AdaReP achieves aggregate success $36/50$ versus $34/50$ for step-wise replanning while cutting planner-side queries by more than 80\%. The experiments further reveal why fixed reuse schedules are hard to tune globally: the preferred replanning cadence varies across backbones, tasks, and even across different stages of the same physical trajectory.

\begin{figure}[t]
    \centering
    \includegraphics[width=0.98\linewidth]{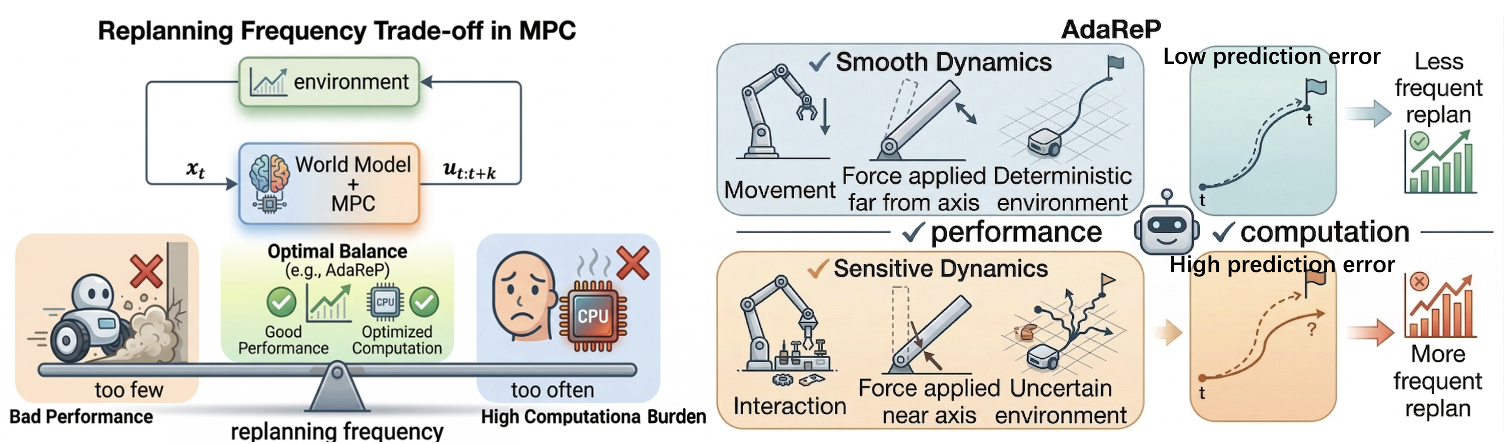}
    \caption{AdaReP addresses the computational efficiency–control performance trade‑off in MPC. Traditional solvers replan frequently to bound cumulative prediction error, incurring substantial computational overhead. AdaReP adjusts replanning frequency on‑the‑fly using online estimates of prediction error and local dynamics sensitivity, thereby reducing computation while preserving control performance.}
    \label{fig:overview}
\end{figure}

Our contributions are as follows:
\begin{itemize}
    \item We derive \textbf{dynamic-regret bounds} for fixed-step, fixed-threshold, and adaptive replanning under model mismatch, identifying the mismatch and sensitivity terms that motivate adaptive thresholding.
    \item We propose \method, a \textbf{training-free} adaptive replanning rule that leverages online deviation and a local sensitivity proxy to determine when a cached plan should be refreshed.
    \item We evaluate \method on image-space, latent-space, and real-robot control benchmarks, demonstrating substantial reductions in world-model queries while requiring only \textbf{backbone-level tuning}.
\end{itemize}

\section{Related Work}
\paragraph{Neural world models and predictive control.}
Neural world-model methods cover latent-dynamics planning in PlaNet, Dreamer, and DreamerV3 \cite{hafner2019planet,hafner2019dream,hafner2023dreamerv3}, latent MPC in TD-MPC and TD-MPC2 \cite{hansen2022temporal,hansentd}, recurrent or object-centric video prediction for manipulation \cite{villegas2019high,minderer2019unsupervised,babaeizadeh2021fitvid,tian2023vp2,Tian_2023_CVPR}, and recent transformer, diffusion, and multimodal predictors \cite{wu2024ivideogpt,zhao2024vlmpc,dingUnderstandingWorldPredicting2024}. Most of this literature improves the predictive backbone, representation, or planning objective. Our focus is complementary: we keep the learned model and planner fixed and instead study the execution-time policy that decides \emph{when} replanning is necessary.

\paragraph{Online control, regret, and computation-aware updates.}
A second line of work connects predictive control to online learning and analyzes regret under forecasts or model mismatch \cite{wagener2019online,li2019online,yu2020power,zhang2021regret,lin2021perturbation,lin2022bounded,muthirayan2022online}. These works study the performance of online or predictive control policies, often in linear or structured time-varying settings. Classical event-triggered and self-triggered control \cite{tabuada2007event,heemels2012introduction}, as well as warm-start or fast MPC variants \cite{jackson2021altro,marcucci2020warm,ma2020event}, also reduce update frequency by linking refreshes to state-dependent conditions. We borrow the structural insight from these literatures, namely that stale updates should be penalized according to local mismatch and sensitivity, but target a different regime: learned neural predictive control, where the trigger is built from discrepancy signals in state or feature space rather than from a known control-theoretic model. This separation lets the same adaptive rule transfer across image-space, latent-space, and real-robot control with backbone-level tuning instead of task-specific redesign.

\section{Problem Setup}
We consider a finite-horizon control problem over horizon $T$. The planning state and action are denoted $x_t\in\mathcal X$ and $u_t\in\mathcal U$, respectively. The environment evolves according to the true dynamics
\begin{equation}
 x_{t+1} = g_t(x_t,u_t),
\label{eq:true_dynamics_main}
\end{equation}
and incurs cumulative cost
\begin{equation}
\cost(x_{0:T},u_{0:T-1}) \coloneqq \sum_{t=0}^{T-1} f_t(x_t,u_t) + F_T(x_T).
\label{eq:episode_cost_main}
\end{equation}

At time $t$, the controller plans with a learned predictive model $\hat g_t$ and an MPC solver over a horizon $k$. We write the resulting planning map as
\[
(y_{t:t+k}, v_{t:t+k-1}) = \psi_t^k(x_t),
\]
where $y_{t:t+k}$ is the model-predicted rollout and $v_{t:t+k-1}$ is the corresponding action sequence. In practice, $\psi_t^k$ may be instantiated by a sampling-based MPC routine such as CEM \cite{de2005tutorial} or MPPI \cite{williamsInformationTheoreticMPC2017} on top of an image-space, latent-space, or state-space predictor.

% We write $\MPC_k $ for the standard receding-horizon controller that replans at every step. %To discuss reduced-frequency replanning, let $p(t)$ denote the most recent replanning time before step $t$. A plan computed at time $p(t)$ provides a cached predicted state $\hat y_{t\mid p(t)}$ for the current step together with the remaining action suffix; for brevity we write $\hat y_t \equiv \hat y_{t\mid p(t)}$.
% We compare against two reduced-frequency baselines. The fixed-step controller $\MPC_k^m$ replans every $m$ steps and executes the cached suffix in between refreshes. The fixed-threshold controller $\MPC_{k,\epsilon}$ replans whenever the discrepancy between execution and the cached rollout exceeds a constant tolerance $\epsilon$.

The controller's planning and execution operate on a representation $z_t \coloneqq \phi(x_t)$, which is the monitored quantity for the trigger. In image-space controllers, $z_t$ is a frozen image-feature embedding \cite{tian2023vp2}; in latent-space controllers, the planner latent serves both roles \cite{hansen2022temporal,hansentd}; and in physical-state controllers, $z_t$ coincides with the raw state $x_t$ (i.e., $\phi$ is the identity). We define the corresponding predicted representation as $\hat z_t \coloneqq \phi(y_t)$. The monitored discrepancy is
\begin{equation}
\label{eq:trigger_metric}
d_t \coloneqq \norm{z_t-\hat z_t}_2 .
\end{equation}
% The controller $\MPC_{k,\epsilon}$ reuses the cached plan while $d_t\leq\epsilon$ and the cached suffix has not reached its horizon end. \method uses the same trigger with a time-varying tolerance $\epsilon_t\in(0,\epsilon_0]$. After executing $u_t$ and observing $x_{t+1}$, it computes $d_{t+1}$ and decides whether to reuse the plan for step $t+1$ or to refresh it before that step.

We write $\MPC_k^1 $ for the standard receding-horizon controller that replans at every step. %To discuss reduced-frequency replanning, let $p(t)$ denote the most recent replanning time before step $t$. A plan computed at time $p(t)$ provides a cached predicted state $\hat y_{t\mid p(t)}$ for the current step together with the remaining action suffix; for brevity we write $\hat y_t \equiv \hat y_{t\mid p(t)}$.
We compare against two reduced-frequency baselines. The fixed-step controller $\MPC_k^m$ replans every $m$ steps and executes the cached suffix in between refreshes. The fixed-threshold controller $\MPC_{k,\epsilon}$ replans whenever the discrepancy between execution and the cached rollout exceeds a constant tolerance $\epsilon$.

Our analysis assumes that the true dynamics are locally Lipschitz around the executed and comparator trajectories: for each $t$ there exists $L_t>0$ such that
\begin{equation}
\norm{g_t(x,u)-g_t(x',u')}_2 \leq L_t\big(\norm{x-x'}_2 + \norm{u-u'}_2\big).
\label{eq:local_lipschitz_main}
\end{equation}
The coefficient $L_t$ is the local dynamics sensitivity at time $t$: it quantifies how strongly one-step state or action perturbations are amplified by the true dynamics. We also write
\(
L \coloneqq \max_{0\leq t<T} L_t\) and
\(
L^{(m)} \coloneqq \max_{0\leq t\leq T-k}\max_{0\leq i\leq m-1}\prod_{s=t}^{t+i} L_s,
\)
so that $L$ captures worst-case single-step sensitivity and $L^{(m)}$ captures the worst cumulative sensitivity over an $m$-step reuse block. In particular, $L^{(m)}\leq L^m$.

Following the online-control viewpoint in predictive control \cite{wagener2019online,li2019online,lin2021perturbation,lin2022bounded}, we measure control quality by dynamic regret,
\[
\mathrm{Regret}(\mathrm{ALG}) \coloneqq \cost(\mathrm{ALG}) - \cost(\OPT),
\]
where $\OPT$ is the clairvoyant optimum that minimizes \eqref{eq:episode_cost_main} under the true dynamics \eqref{eq:true_dynamics_main}. Computation is measured by the number of function evaluations (NFE), namely the total number of world-model queries issued by the planner during one episode.

\section{Adaptive Replanning under Model Mismatch}

This section presents the key analytical results and insights behind our trigger design. Due to page limits, we focus on the distilled conclusions that are essential for the algorithmic interpretation; complete derivations, assumptions, and proofs are deferred to the supplementary material.

\subsection{Regret structure under fixed-step and fixed-threshold reuse}
% Two quantities drive stale-plan error: the accumulated mismatch since the last replanning step, and the local dynamics' amplification of that mismatch, captured by the sensitivity factors $L_t$ and $L^{(m)}$ from \cref{eq:local_lipschitz_main}.

To understand when replanning is needed, we first identify the core quantities that govern the error incurred by reusing a cached plan under model mismatch.
Our derivation adapts the perturbation-based dynamic-regret analysis of \cite{lin2021perturbation,lin2022bounded} to cached-plan reuse. We record only the resulting dependence on reuse length, tolerance, mismatch, and local sensitivity.%The proof first bounds the deviation induced by reusing a cached plan and then feeds that deviation into the standard perturbation-to-regret reduction used in prior predictive-control work. We record only the resulting dependence on reuse length, tolerance, mismatch, and local sensitivity.

\begin{theorem}[Fixed-step and fixed-threshold regret bounds]
\label{thm:condensed_bounds}
Under the local perturbation assumptions of \cite{lin2021perturbation,lin2022bounded} and the small-error condition that keeps the trajectory inside the local tube around the clairvoyant optimum,
\begin{align}
\mathrm{Regret}(\MPC_k^m)
&= O\!\left(\sqrt{m(L^{(m)})^2\cost(\OPT)\,E} + m(L^{(m)})^2E\right),
\label{eq:fixed_m_regret_main}
\\
\mathrm{Regret}(\MPC_{k,\epsilon})
&= O\!\left(\sqrt{L^2\cost(\OPT)\,(E+\epsilon E+\epsilon^2T)}\right. \nonumber\\
&\qquad\left.+\,L^2(E+\epsilon E+\epsilon^2T)\right),
\label{eq:fixed_eps_regret_main}
\end{align}
where   $E$ denotes the total cumulative prediction error over all reused segments in an episode.
\end{theorem}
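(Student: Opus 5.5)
The plan is to reduce both bounds to the perturbation-to-regret lemma of \cite{lin2021perturbation,lin2022bounded}. In the form we will use, it says the following. Suppose the executed trajectory stays in the local tube around $\OPT$, and let $e_t$ denote the per-step deviation between the action actually applied and the action that exact, step-wise MPC would apply from the same state. Then
\[
\mathrm{Regret}(\mathrm{ALG}) = O\!\left(\sqrt{\cost(\OPT)\,S}+S\right), \qquad S\coloneqq\sum_t \norm{e_t}_2^2 .
\]
This lemma rests on the exponential decay of perturbations, the well-conditioned costs, and the small-error condition assumed in the statement. So the whole work is to bound $S$ for each controller in terms of $E$, $\epsilon$, $T$ and the sensitivities. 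The square-root-plus-linear form of \eqref{eq:fixed_m_regret_main} and \eqref{eq:fixed_eps_regret_main} then follows directly.

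\textbf{Fixed-step reuse.} Fix a reuse block starting at a replanning time $p$, and let $i=t-p$ for $0\le i\le m-1$. The cached action $v_t$ was computed for the predicted state $y_t$, while the system is actually at $x_t$.
\begin{enumerate}
\item By Lipschitz continuity of the planning map (part of the perturbation assumptions), $\norm{e_t}_2 \lesssim \norm{x_t-y_t}_2$.
\item Unroll $x_{s+1}-y_{s+1} = \bigl(g_s(x_s,v_s)-g_s(y_s,v_s)\bigr) + \bigl(g_s(y_s,v_s)-\hat g_s(y_s,v_s)\bigr)$ and apply \eqref{eq:local_lipschitz_main} to the first bracket.
\item This gives $\norm{x_t-y_t}_2 \le \sum_{s=p}^{t-1}\bigl(\prod_{r=s+1}^{t-1}L_r\bigr)\,\delta_s \le L^{(m)}\sum_{s=p}^{t-1}\delta_s$, where $\delta_s$ is the one-step model mismatch.
\item Squaring and applying Cauchy--Schwarz gives $\norm{e_t}_2^2 \lesssim (L^{(m)})^2\, i\sum_{s}\delta_s^2$.
\item Summing over the at most $m$ steps of the block and then over blocks yields $S \lesssim m (L^{(m)})^2 E$, where $E$ collects the squared accumulated mismatch over reused segments.
\end{enumerate}
If $E$ is instead defined as the sum of accumulated errors rather than squared errors, the same counting gives the factor $m$.

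\textbf{Fixed-threshold reuse.} Here reuse continues only while $d_t\le\epsilon$, so accumulation cannot run unchecked over long blocks.
\begin{enumerate}
\item Decompose $\norm{x_t-y_t}_2$ into two parts: the observed discrepancy, which is at most $\epsilon$ via $d_t$ (with a constant from the conditioning of $\phi$), and the one-step unobserved propagation since the last check.
\item The unobserved propagation contributes $L\,\delta_{t-1}$ plus a term $L\epsilon$ from amplifying the already-tolerated deviation.
\item Squaring gives a per-step deviation bound $\norm{e_t}_2^2 \lesssim L^2\bigl(\delta_{t-1}^2 + \epsilon\,\delta_{t-1} + \epsilon^2\bigr)$. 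The cross term is kept linear in $\epsilon$ rather than absorbed by AM--GM, which is what produces the $\epsilon E$ term.
\item Summing over $t\le T$ gives $S\lesssim L^2(E+\epsilon E+\epsilon^2T)$.
\end{enumerate}
Only the single-step constant $L$ appears, because the trigger resets the deviation whenever it exceeds $\epsilon$, so no multi-step product survives.

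\textbf{Main obstacle.} The delicate step is the first link in the chain: converting a deviation between stale and fresh plans into the perturbation quantity required by the regret lemma. Two things must be checked.
\begin{itemize}
\item The tube condition must hold uniformly, so that the lemma's decay constants apply. I would handle this by induction on $t$: the bounds on $S$ above, together with the small-error hypothesis, keep $x_t$ within the tube radius.
\item The discrepancy in representation space, $d_t$, must control the state discrepancy. I would assume $\phi$ is bi-Lipschitz on the tube and absorb its constants into the $O(\cdot)$.
\end{itemize}
If the cross term $\epsilon\,\delta$ resists the clean form above, I would fall back on bounding $\sum_t\delta_t\le E$ directly, treating $E$ as a linear accumulated error, which still matches the stated rate.
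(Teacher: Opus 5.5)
There is a genuine gap in your first link. For the perturbation-to-regret lemma to bound regret against $\OPT$, $e_t$ must be measured against the clairvoyant action $u^*_{t\mid t}$, computed with the true parameters from the current state.

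The cached action, however, was optimized for the \emph{predicted} parameters $\xi_{\cdot\mid p(t)}$ and the planner's terminal target. It therefore differs from $u^*_{t\mid t}$ even when $x_t=y_t$. Your Step~1, $\norm{e_t}_2\lesssim\norm{x_t-y_t}_2$, discards this difference entirely. At every replanning step $x_t=y_t$, so for $m=1$ your argument gives $S=0$ and hence $\mathrm{Regret}(\MPC_k^1)=0$ for any mismatched model and finite horizon, which cannot hold in general.

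Relatedly, your $\delta_s=\norm{g_s(y_s,v_s)-\hat g_s(y_s,v_s)}_2$ only measures drift along the predicted rollout. It can vanish while the model is wrong off that rollout and the plan is suboptimal, so no $E$ built from $\delta_s$ alone can control regret. If you instead intended the comparator to be a fresh learned-model plan, the lemma no longer yields regret against $\OPT$. You would then have to add $\mathrm{Regret}(\MPC_k^1)$, which is the same missing term. Your threshold argument inherits the same omission.

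The missing ingredient is the parameter-perturbation bound \eqref{eq:supp_perturb_initial}. The paper first uses the principle of optimality to write the cached action as the tail solution started from $\hat y_{t\mid t'}$ under the cached predictions. It then splits $e_t$ by the triangle inequality into two pieces:
\begin{itemize}
\item a state-deviation term, at most $q_3(0)\norm{\hat y_{t\mid t'}-x_t}\le q_3(0)\varepsilon_t$ up to the constant $c_\phi$;
\item a prediction-mismatch term, at most $\sum_{\tau}\big((R_1+D_{x^*})q_1(\tau)+q_2(\tau)\big)\rho_{t',\tau}+2R_2\big((R_1+D_{x^*})q_1(k)+q_2(k)\big)$.
\end{itemize}
The second piece is exactly what $s_t$ summarizes. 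This gives $e_t=O(\varepsilon_t+s_t)$, and $E=\sum_t(s_t+s_t^2)$ then absorbs both the linear cross term and the quadratic term.

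For fixed-step reuse the paper does not need your drift term at all. It compares against the clairvoyant continuation from the last replanning state $x_{t'}$, so $e_{t\mid t'}$ is purely the parameter-perturbation term. The factor $L^{(m)}$ enters when these action errors are propagated through the true dynamics (the state-deviation lemma), not from propagating $\delta_s$. Your drift bound is valid, but it can only be \emph{added} to the prediction-mismatch term, not substituted for it.

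Two minor points:
\begin{itemize}
\item In the threshold algorithm the test $d_t\le\epsilon$ is performed at step $t$ itself. So $\norm{x_t-\hat y_t}\le\epsilon/c_\phi$ holds directly, and no one-step ``unobserved propagation'' is needed.
\item Your factor $m$ (rather than $m^2$) requires $E$ to charge each step with the mismatch accumulated since its last replan. The paper's $E_m$ does this via $m\rho_{mn,\tau}^2$ per block.
\end{itemize}
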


For fixed-step reuse, the extra term $m(L^{(m)})^2E$ exposes the {core difficulty}: mismatch accumulates throughout the reuse block, and that accumulated mismatch is then amplified by the multi-step sensitivity factor $L^{(m)}$. Since $L^{(m)}\leq L^m$, the penalty can grow \emph{exponentially} with the reuse length. %in locally sensitive regimes.

For fixed-threshold reuse, the additional dependence is
\(
\epsilon L^2E + \epsilon^2L^2T.
\)
The mixed term couples tolerated deviation with the cumulative mismatch budget accumulated under stale reuse, while the quadratic term penalizes maintaining a nonzero tolerance throughout the episode. A single fixed threshold is therefore \textbf{brittle} whenever mismatch and sensitivity vary over time.

\subsection{Adaptive thresholded replanning}
\begin{algorithm}[t]
    \caption{\method: deployment-time adaptive replanning (practical version)}
    \label{alg:adarep}
    \begin{algorithmic}[1]
    \Require Base threshold $\epsilon_0$, weights $\alpha_\delta, \alpha_L$, smoothing window $W$, stabilizer $\gamma>0$, and base planner
    \State Replan at $t=0$ to obtain an active trajectory $( y_{0:k}, v_{0:k-1})$
    \For{$t=0,1,\dots,T-1$}
        \State Execute the current control $u_t$ and observe $x_{t+1}$
        \State Compute deviation $d_{t+1}$ and sensitivity estimate $\widehat{L}_t$ via \cref{eq:trigger_metric,eq:local_sensitivity_surrogate}
        \State Update sliding-window averages $\bar d_{t+1}$ and $\bar L_t$ over the last $W$ values
        \State Set $\epsilon_{t+1}$ via \cref{eq:adaptive_threshold}
        \If{$d_{t+1} > \epsilon_{t+1}$ \textbf{or} $t+1$ reaches the end of the cached plan}
            \State Replan and refresh the active trajectory for step $t+1$
        \Else
            \State Reuse the remaining plan suffix
        \EndIf
    \EndFor
    \end{algorithmic}
    \end{algorithm}
The fixed-threshold bound points to the core design principle: the reuse tolerance should contract when stale mismatch or local sensitivity grows. Inspecting the proof more closely, the threshold-dependent regret contribution appears through per-step terms of the form $L_t^2\epsilon_t^2$ and $L_t^2\epsilon_t s_t$, where $s_t$ is a stale-mismatch statistic capturing multi-step prediction errors along the cached rollout. This term is, however, not available at deployment time since it requires access to the full cached trajectory.

Motivated by this structure, we seek a deployable algorithm that adapts the reuse tolerance online as shown in  \cref{alg:adarep}. 
%The challenge is to replace the inaccessible $s_t$ with signals that can be measured during execution. Observing that $s_t$ quantifies how far the execution has drifted from the cached rollout, we can directly monitor this divergence. Meanwhile, the sensitivity term $L_t$ can be approximated by a lightweight finite-difference estimate. 
\method tracks two execution-time quantities: the current deviation from the cached rollout,
\[
 d_{t+1}=\norm{z_{t+1}-\hat z_{t+1}}_2,
\]
and a finite-difference sensitivity proxy computed in the same monitored space,
\begin{equation}
\widehat{L}_t \coloneqq \frac{\norm{z_{t+1}-z_t}_2}{\norm{u_t}_2+\gamma},
\label{eq:local_sensitivity_surrogate}
\end{equation}
with a small stabilizer $\gamma>0$ to avoid spurious inflation when $\norm{u_t}_2$ is close to zero and $\widehat L_t$ indicates how strongly that staleness is likely to be amplified by the local dynamics. 

%To motivate the update rule, 
Consider the  threshold
\(
\tilde\epsilon_t = \epsilon_0 \exp(-\alpha_\delta d_t)\exp(-\alpha_L \widehat L_t).
\)
%Under the idealized monitor-dominance conditions used only to motivate the threshold form --- namely, that the monitored discrepancy upper-bounds stale mismatch and $\widehat L_t$ lower-bounds local sensitivity up to fixed constants --- 
the exponential form directly controls the two threshold-dependent penalties above:
\begin{equation}
L_t^2\tilde\epsilon_t^2 \lesssim \frac{\epsilon_0^2}{\alpha_L^2},
\qquad
L_t^2\tilde\epsilon_t s_t \lesssim \frac{\epsilon_0}{\alpha_L^2\alpha_\delta}.
\label{eq:adaptive_penalty_control_main}
\end{equation}
These inequalities follow from the elementary bounds $xe^{-ax}\leq (ea)^{-1}$ and $x^2e^{-ax}\leq 4(e^2a^2)^{-1}$. Exponential decay is therefore not an arbitrary modeling choice: it is the \emph{simplest} way to convert the sensitivity-amplified terms $L_t^2\epsilon_t^2$ and $L_t^2\epsilon_t s_t$ into uniform contributions controlled by $(\epsilon_0,\alpha_\delta,\alpha_L)$.

In deployment we apply a short smoothing window $W$ for numerical stability and set
\begin{equation}
\label{eq:adaptive_threshold}
\epsilon_{t+1} = \epsilon_0 \exp(-\alpha_\delta \bar d_{t+1})\exp(-\alpha_L \bar L_t).
\end{equation}
%Here $\epsilon_{t+1}$ is the threshold used to decide whether the cached plan should be refreshed before step $t+1$. 
This keeps $\epsilon_{t+1}\in(0,\epsilon_0]$ and reduces the allowed reuse whenever either the current deviation or the local sensitivity estimate increases.

% The practical version uses smoothed statistics for numerical stability; the full threshold-update subroutine is listed in \cref{alg:supp_threshold_update} of the supplementary material.

% \subsection{Adaptive regret bound}
Finally we give theoretical gaurantee for our proposed algorithm. Applying the same perturbation-to-regret reduction with the idealized threshold \eqref{eq:adaptive_threshold} yields an adaptive bound with explicit dependence on $(\epsilon_0,\alpha_\delta,\alpha_L)$.

\begin{theorem}%[Adaptive regret bound for exponential thresholding]
\label{thm:adaptive_regret_summary}
Under the local perturbation assumptions of \cite{lin2021perturbation,lin2022bounded}, \method with threshold \eqref{eq:adaptive_threshold} satisfies
\begin{align}
\mathrm{Regret}(\method)
&= O\!\left(\sqrt{\cost(\OPT)\left(L^2E + \frac{\epsilon_0}{\alpha_L^2}\left(\epsilon_0 + \frac{1}{\alpha_\delta}\right)T\right)}\right.\nonumber\\
&\qquad\left.+\,L^2E + \frac{\epsilon_0}{\alpha_L^2}\left(\epsilon_0 + \frac{1}{\alpha_\delta}\right)T\right).
\label{eq:adaptive_regret_main}
\end{align}
\end{theorem}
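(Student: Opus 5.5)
We follow the same two-stage route as for \cref{thm:condensed_bounds}: first bound the per-step perturbation that cached-plan reuse injects relative to the step-wise controller, then push the accumulated squared perturbation through the perturbation-to-regret reduction of \cite{lin2021perturbation,lin2022bounded}.

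We will use that reduction in the form
\[
\mathrm{Regret}(\mathrm{ALG}) = O\!\left(\sqrt{\cost(\OPT)\,P} + P\right),
\qquad P \coloneqq \sum_{t} \norm{\eta_t}_2^2 .
\]
Here $\eta_t$ is the per-step deviation between the action (or state) actually executed and the one the clairvoyant-tracking controller would have produced. This is exactly the shape of \eqref{eq:fixed_m_regret_main} and \eqref{eq:fixed_eps_regret_main}. It therefore suffices to show
\[
P = O\!\left(L^2E + \tfrac{\epsilon_0}{\alpha_L^2}\big(\epsilon_0 + \tfrac{1}{\alpha_\delta}\big)T\right).
\]

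\emph{Step 1: per-step perturbation.} Fix a step $t$ inside a reuse segment with active tolerance $\epsilon_t$. Since the trigger did not fire, the monitored deviation satisfies $d_t\le\epsilon_t$. We would repeat the fixed-threshold argument with $\epsilon$ replaced by $\epsilon_t$. By the local Lipschitz property \eqref{eq:local_lipschitz_main}, the injected perturbation then obeys
\[
\norm{\eta_t}_2 \lesssim L_t\,(e_t + \epsilon_t + s_t),
\]
where $e_t$ is the one-step model error and $s_t$ is the stale-mismatch statistic. Squaring and applying Young's inequality gives
\[
\norm{\eta_t}_2^2 \lesssim L_t^2 e_t^2 + L_t^2 s_t^2 + L_t^2\epsilon_t^2 + L_t^2\epsilon_t s_t .
\]
The first two terms sum to at most $L^2E$ by the definition of $E$ as the cumulative mismatch over reused segments. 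This matches the $E$ contribution of \cref{thm:condensed_bounds}; the $\epsilon E$ cross term there now gets absorbed differently.

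\emph{Step 2: threshold-dependent terms.} The two remaining terms are where the exponential threshold is used. We invoke the idealized monitor-dominance conditions: the smoothed deviation $\bar d_t$ upper-bounds $s_t$ up to a constant, and the smoothed proxy $\bar L_t$ lower-bounds $L_t$ up to a constant. Then \eqref{eq:adaptive_penalty_control_main}, via the bounds $xe^{-ax}\le(ea)^{-1}$ and $x^2e^{-ax}\le 4(e^2a^2)^{-1}$, gives uniformly in $t$
\[
L_t^2\epsilon_t^2\lesssim \frac{\epsilon_0^2}{\alpha_L^2},
\qquad
L_t^2\epsilon_t s_t\lesssim \frac{\epsilon_0}{\alpha_L^2\alpha_\delta}.
\]
Summing over at most $T$ steps yields $\tfrac{\epsilon_0}{\alpha_L^2}(\epsilon_0+\alpha_\delta^{-1})T$. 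Adding Step 1 and substituting $P$ into the reduction gives \eqref{eq:adaptive_regret_main}. The small-error condition keeps the trajectory inside the local tube, as in \cref{thm:condensed_bounds}.

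\emph{Main obstacle.} The delicate step is justifying the monitor-dominance link in Step 2. The threshold uses the windowed averages $\bar d_{t+1}$ and $\bar L_t$, while the penalty terms involve the instantaneous $L_t$ and $s_t$. Also, $L_t^2$ must be controlled by $\exp(-\alpha_L L_t)$ with the exponent on $L_t$ itself, not on a squared quantity. We would first assume constants $c_1,c_2$ with $s_t\le c_1\bar d_t$ and $c_2 L_t\le \bar L_t$, absorbing them into the $O(\cdot)$; the corresponding factors $e^{-\alpha_L c_2 L_t}$ and $e^{-\alpha_\delta c_1 \bar d_t}$ then still yield the stated rates. If the windowed average breaks dominance at segment starts, we would instead charge those $O(W)$ boundary steps per segment to the $L^2E$ term.
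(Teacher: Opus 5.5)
Your proposal is correct and follows essentially the paper's own argument: the threshold-reuse per-step bound $O(\varepsilon_t+s_t)$ with the time-varying tolerance, Lipschitz propagation producing the $L_t^2\epsilon_t^2$ and $L_t^2\epsilon_t s_t$ penalties, monitor dominance plus $xe^{-ax}\le (ea)^{-1}$ and $x^2e^{-ax}\le 4(e^2a^2)^{-1}$ to bound these uniformly, and the smoothness-based regret reduction (the paper assumes dominance for the instantaneous threshold $\tilde\epsilon_t$ and treats the smoothed one as a stabilized variant, whereas you assume it directly for $\bar d_t,\bar L_t$, which matches the stated threshold more faithfully). One caveat: drop your fallback of charging boundary steps to $L^2E$, because at such steps the contribution $L_t^2\epsilon_t^2\le L^2\epsilon_0^2$ contains no mismatch and cannot be absorbed into $E$; the dominance assumption itself has to carry the argument, exactly as in the paper.
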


% Relative to the fixed-threshold bound, the explicit dependence $\epsilon L^2E+\epsilon^2L^2T$ is replaced by
% \[
% \frac{\epsilon_0}{\alpha_L^2}\left(\epsilon_0+\frac{1}{\alpha_\delta}\right)T.
% \]
Larger $\alpha_L$ more strongly suppresses the quadratic sensitivity term, while larger $\alpha_\delta$ more strongly suppresses the mixed mismatch term.% This is the main theoretical motivation for the exponential threshold used by \method.

\section{Experimental Evaluation}
We evaluate \method across three experimental domains of increasing realism to answer: \emph{Can \method reduce computational cost while maintaining task performance across diverse world models and physical platforms?}

\begin{enumerate}
    \item \textbf{Simulation with Image-space World Models.} We first evaluate on the VP2 benchmark \cite{tian2023vp2} using the RoboDesk manipulation environment with two predictive models: SVG \cite{villegas2019high} and Struct-VRNN \cite{minderer2019unsupervised}. This covers 7 manipulation tasks including button pushing, drawer opening, and block manipulation.

    \item \textbf{Simulation with Latent-space World Models.} We further test on the DeepMind Control Suite \cite{tunyasuvunakool2020} using TD-MPC2 \cite{hansentd}, a SOTA model-based RL agent that plans in learned latent spaces across 30 continuous control tasks.

    \item \textbf{Physical Real-world Deployment.} Finally, we validate on a real Franka Emika Panda arm performing door opening and T-block rearrangement using a state-based world model, to assess whether simulation gains transfer to physical robotics.
\end{enumerate}

\subsection{Main Results}
% \textbf{Across all three settings}, \method substantially reduces planner-side NFE while maintaining or exceeding the task performance of step-wise replanning.

\begin{figure}[h]
    \centering
    \includegraphics[width=0.95\linewidth]{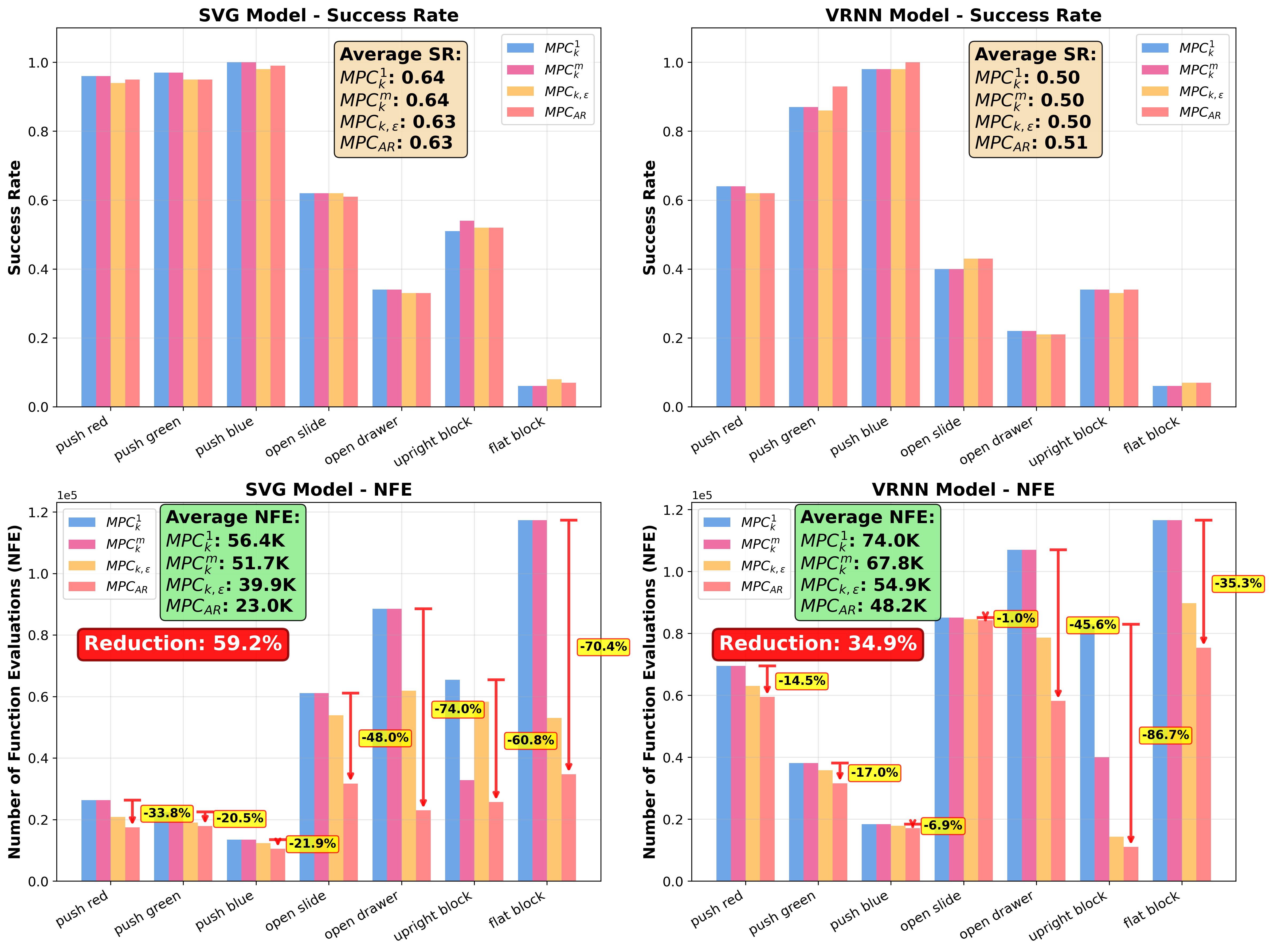}
    \caption{VP2 results on RoboDesk with SVG and Struct-VRNN under the matched-performance protocol. \method achieves the lowest NFE on both backbones, reducing NFE by \textbf{59.2\%} (SVG) and \textbf{34.9\%} (VRNN) with wall-clock reductions of \textbf{57.8\%} and \textbf{31.5\%}.}
    \label{fig:vp2_results_highres}
\end{figure}

\begin{figure}[h]
    \centering
    \includegraphics[width=0.95\linewidth]{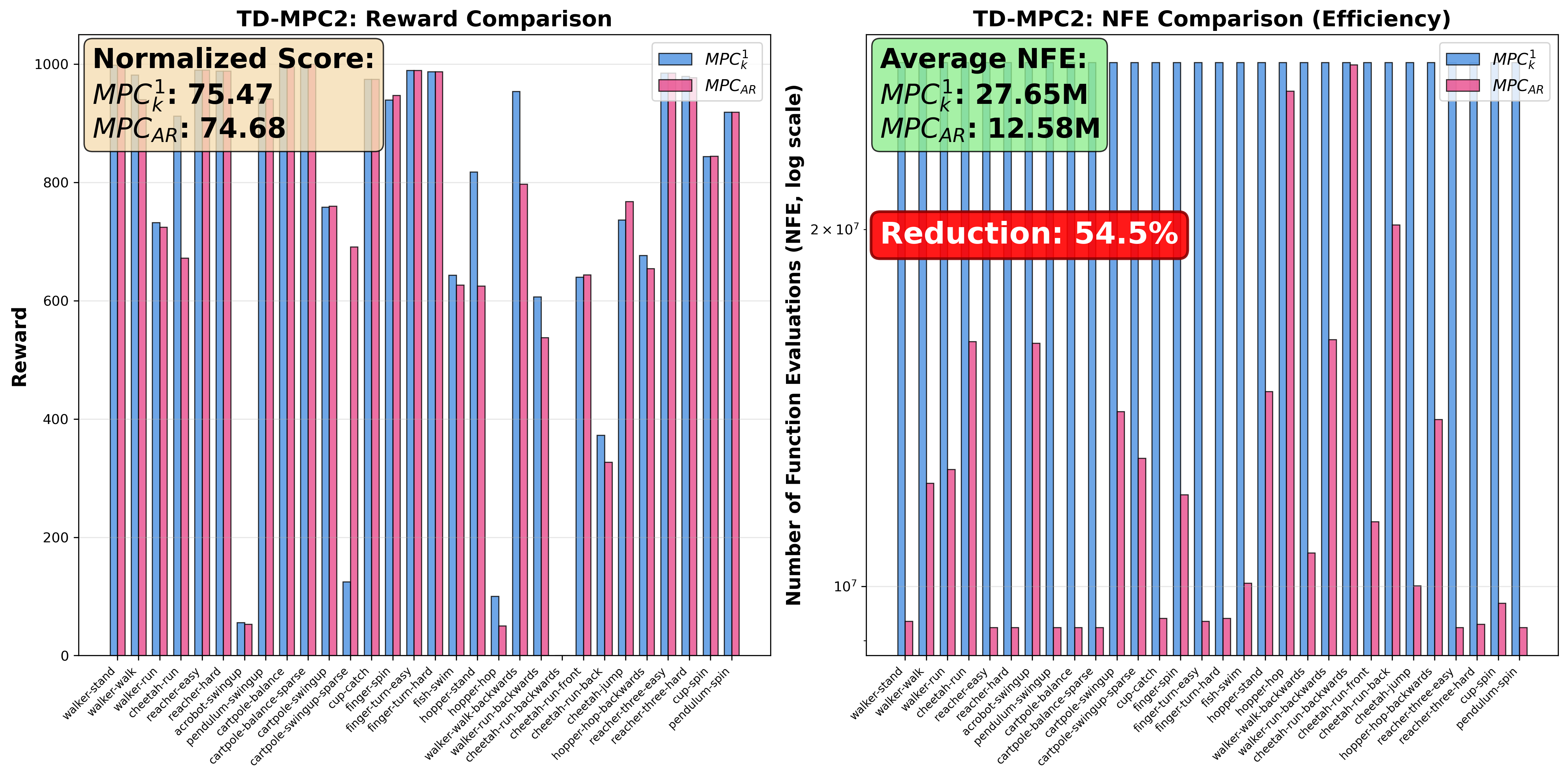}
    \caption{DeepMind Control results with TD-MPC2. \method reduces average NFE by \textbf{54.5\%} and wall-clock time by \textbf{50.3\%} while closely matching the mean normalized score of step-wise replanning across 30 tasks.}
    \label{fig:tdmpc2_results_highres}
\end{figure}

\begin{figure}[h]
    \centering
    \includegraphics[width=0.95\linewidth]{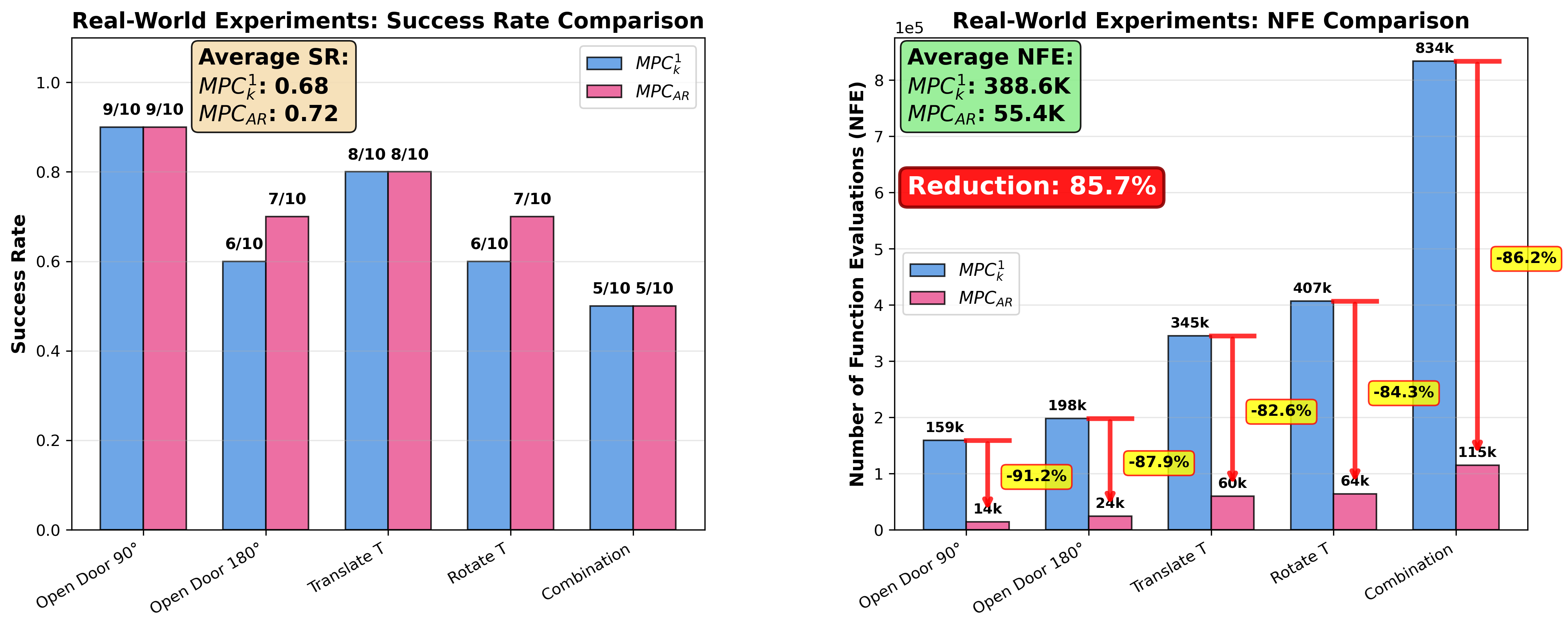}
    \caption{Real-world Franka results. \method achieves comparable success  while cutting planner-side NFE by over \textbf{80\%} across both articulation and rearrangement tasks.}
    \label{fig:franka_results_highres}
\end{figure}

On VP2 (\cref{fig:vp2_results_highres}),  we tune hyperparameters for all methods to ensure success rates drop no more than \textbf{0.02} compared to the standard $\MPC_k^1$ baseline. Under the matched-performance protocol, \method achieves the lowest NFE on both, reducing NFE by \textbf{59.2\%} (SVG) and \textbf{34.9\%} (Struct-VRNN) with wall-clock reductions of \textbf{57.8\%} and \textbf{31.5\%} respectively.

On DeepMind Control with TD-MPC2 (\cref{fig:tdmpc2_results_highres}), \method reduces average NFE by \textbf{54.5\%} and wall-clock time by \textbf{50.3\%} while matching the mean normalized score of step-wise replanning across 30 tasks.

On physical hardware (\cref{fig:franka_results_highres}), \method cuts planner-side queries by over \textbf{80\%} while achieving comparable success (36/50 vs. 34/50). State-based world models are more explicit than vision-based ones, which explains the stronger NFE gains on the real robot than in simulation.
\subsection{Analysis and Discussion}

\paragraph{\method adapts to prediction accuracy.}
To test how prediction quality affects \method, we introduce controlled Gaussian noise on state components (robot position/velocity, object position/velocity, end-effector position) in a disturbed simulator. As shown in \cref{fig:diagnostics_main}, the prediction discrepancy signal $d_t$ grows with noise, the sensitivity estimate $\widehat L_t$ adapts accordingly, and the algorithm naturally triggers more frequent replanning. \textbf{Crucially}, \method never degrades below step-wise replanning success rates; it gracefully falls back toward step-wise replanning in the limit of large mismatch.

\begin{figure}[h]
    \centering
    \begin{minipage}[b]{0.98\linewidth}
        \centering
        \includegraphics[width=\linewidth]{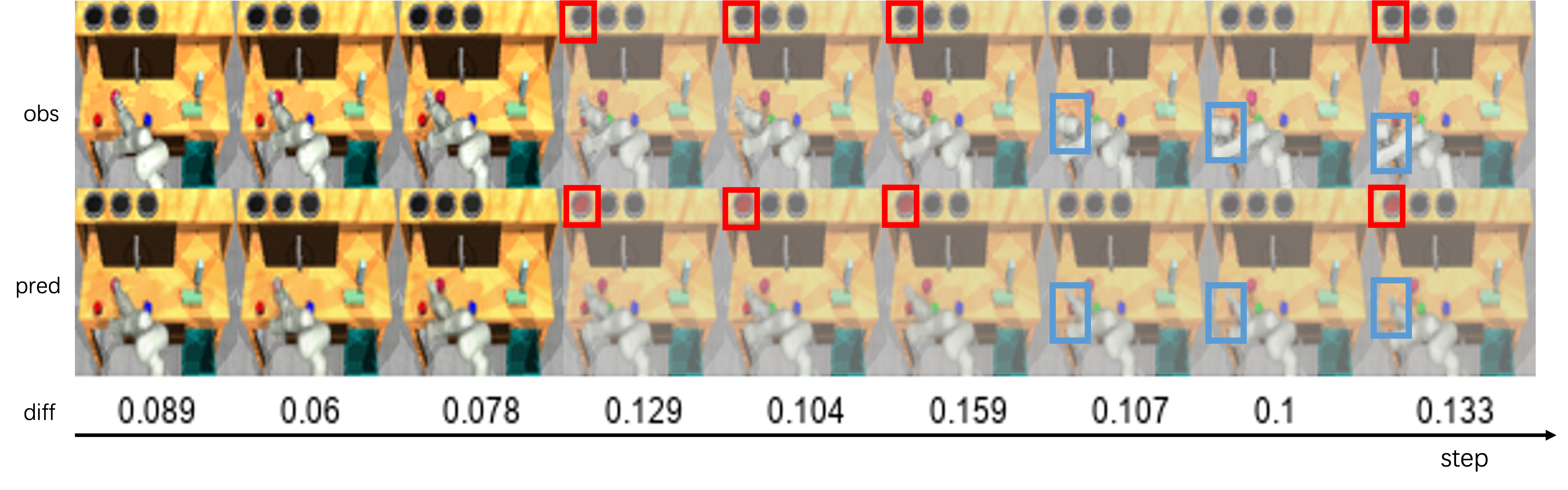}
    \end{minipage}\\[0.35em]
    \begin{minipage}[b]{0.98\linewidth}
        \centering
        \includegraphics[width=\linewidth]{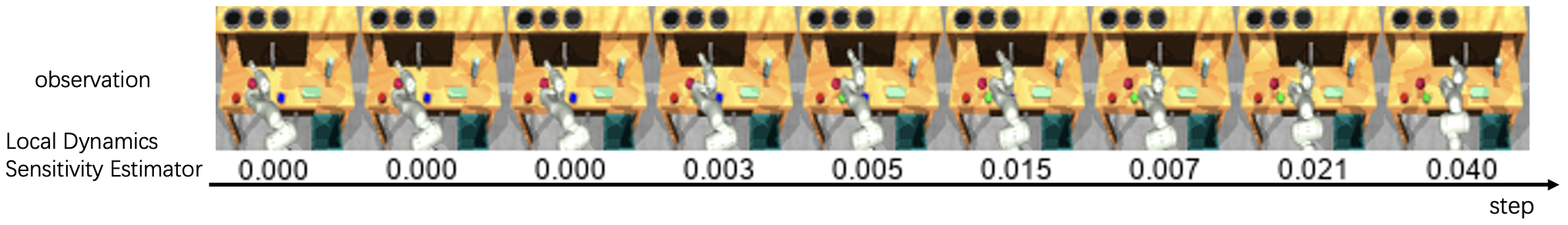}
    \end{minipage}
    \caption{Online monitors used by \method. Top: prediction discrepancy $d_t$ between execution and the cached rollout; highlighted regions mark when the cached plan has become stale. Bottom: local sensitivity estimate $\widehat L_t$ along a door-opening trajectory; sensitivity rises near the hinge, where stale plans are most dangerous.}
    \label{fig:diagnostics_main}
\end{figure}

\paragraph{\method adapts to intra-trajectory dynamics sensitivity.}
The effective sensitivity of the dynamics varies within a single episode. In the door-opening task, dynamics are highly sensitive near the hinge axis---small end-effector motions induce large angular changes---while interactions far from the hinge are smoother. \method adapts by shortening reuse near the hinge and extending it in smoother phases (\cref{fig:tradeoff_sweeps}), a capability unattainable with a fixed cadence. This behavior aligns precisely with the regime dependence identified in \cref{eq:fixed_eps_regret_main}.

\begin{figure}[h]
    \centering
    \includegraphics[width=0.9\linewidth]{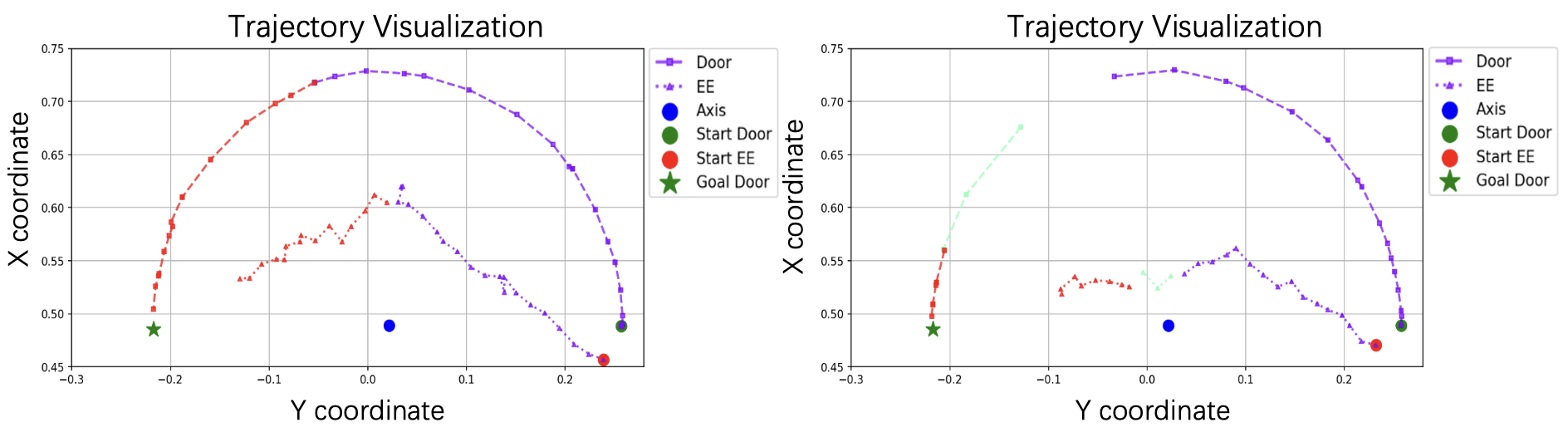}
    \caption{Dynamics sensitivity varies within a single door-opening episode. Far from the hinge (left), motion is smooth and reuse can be extended. Near the hinge (right), small end-effector motions cause large state changes, and reuse must be shortened.}
    \label{fig:tradeoff_sweeps}
\end{figure}

\paragraph{\method maintains performance in worst-case conditions.}
Real-world applications often present challenges such as large, unexpected prediction errors or highly sensitive dynamics. As shown in \cref{fig:diagnostics_main} (top), the prediction discrepancy signal $d_t$ grows with state disturbance, prompting more frequent replanning; and as shown in \cref{fig:robustness_analysis}, \method's success rate is preserved across all disturbance levels. In such adverse conditions, \method automatically reduces its reuse tolerance, falling back toward step-wise replanning. This graceful degradation ensures that performance never drops below the step-wise replanning baseline, at the cost of reduced NFE savings in demanding regimes.

\paragraph{Robust visual features remain a challenge.}
Even with a perfectly accurate underlying state predictor, corrupting images with Gaussian noise or blur degrades the DINO-based feature signal, causing the discrepancy monitor to trigger unnecessary replanning and reducing NFE savings. The robustness analysis (\cref{fig:robustness_analysis}) confirms this sensitivity: under visual corruption, the NFE advantage of \method narrows substantially. Developing more robust visual feature extractors is a valuable direction for extending \method to noisy vision-based settings.

\begin{figure}[t]
    \centering
    \includegraphics[width=0.9\linewidth]{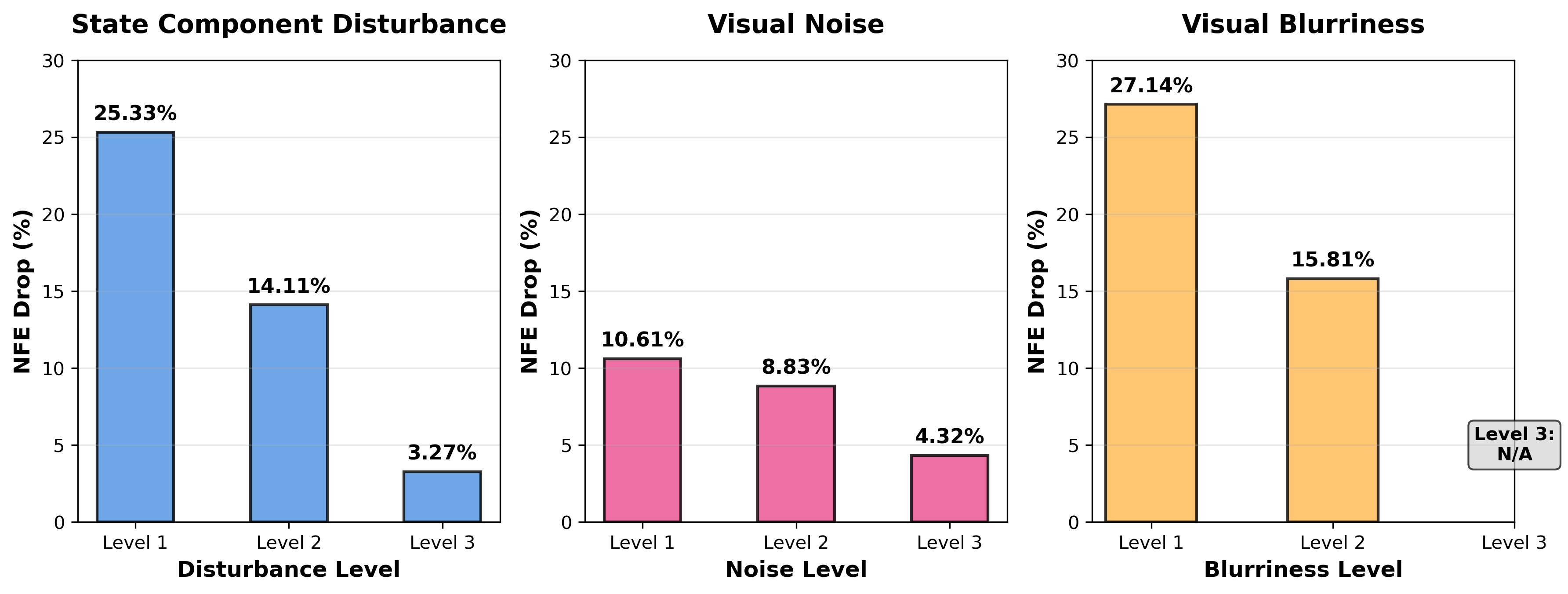}
    \caption{Impact of state disturbance, visual noise, and visual blurriness on the NFE reduction of \method relative to step-wise replanning. Success rates are maintained across all conditions. Visual corruption notably diminishes the NFE advantage, highlighting the importance of robust feature extractors.}
    \label{fig:robustness_analysis}
\end{figure}
\begin{figure}[h]
    \centering
    \begin{minipage}[c]{0.48\linewidth}
        \centering
        \includegraphics[width=\linewidth]{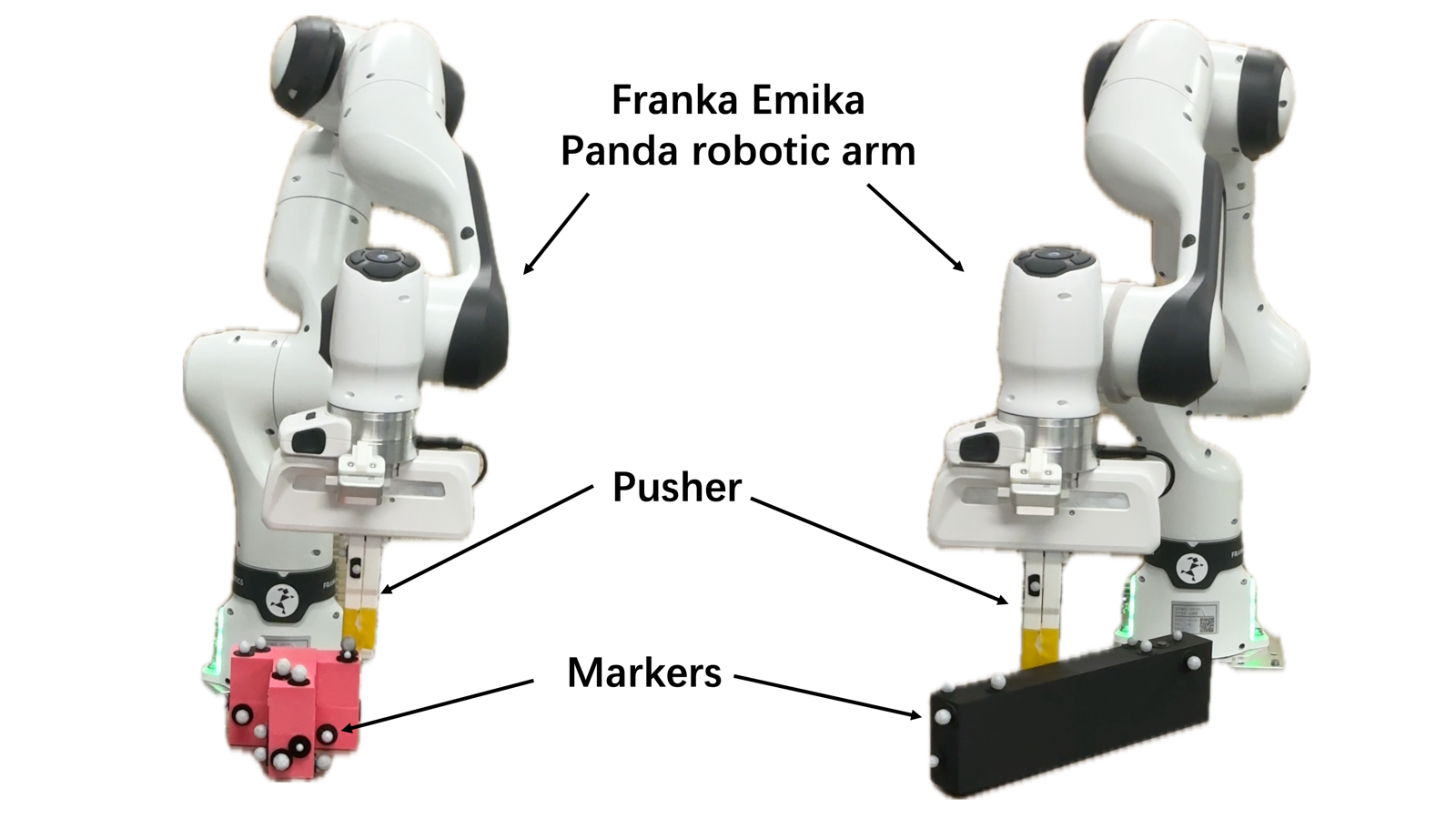}
        \caption{Physical evaluation platform. The Franka Emika Panda is used for articulation tasks (door opening) and long-horizon rearrangement tasks (T-block manipulation).}
        \label{fig:franka_setup}
    \end{minipage}
    \hfill
    \begin{minipage}[c]{0.48\linewidth}
        \centering
        \includegraphics[width=\linewidth]{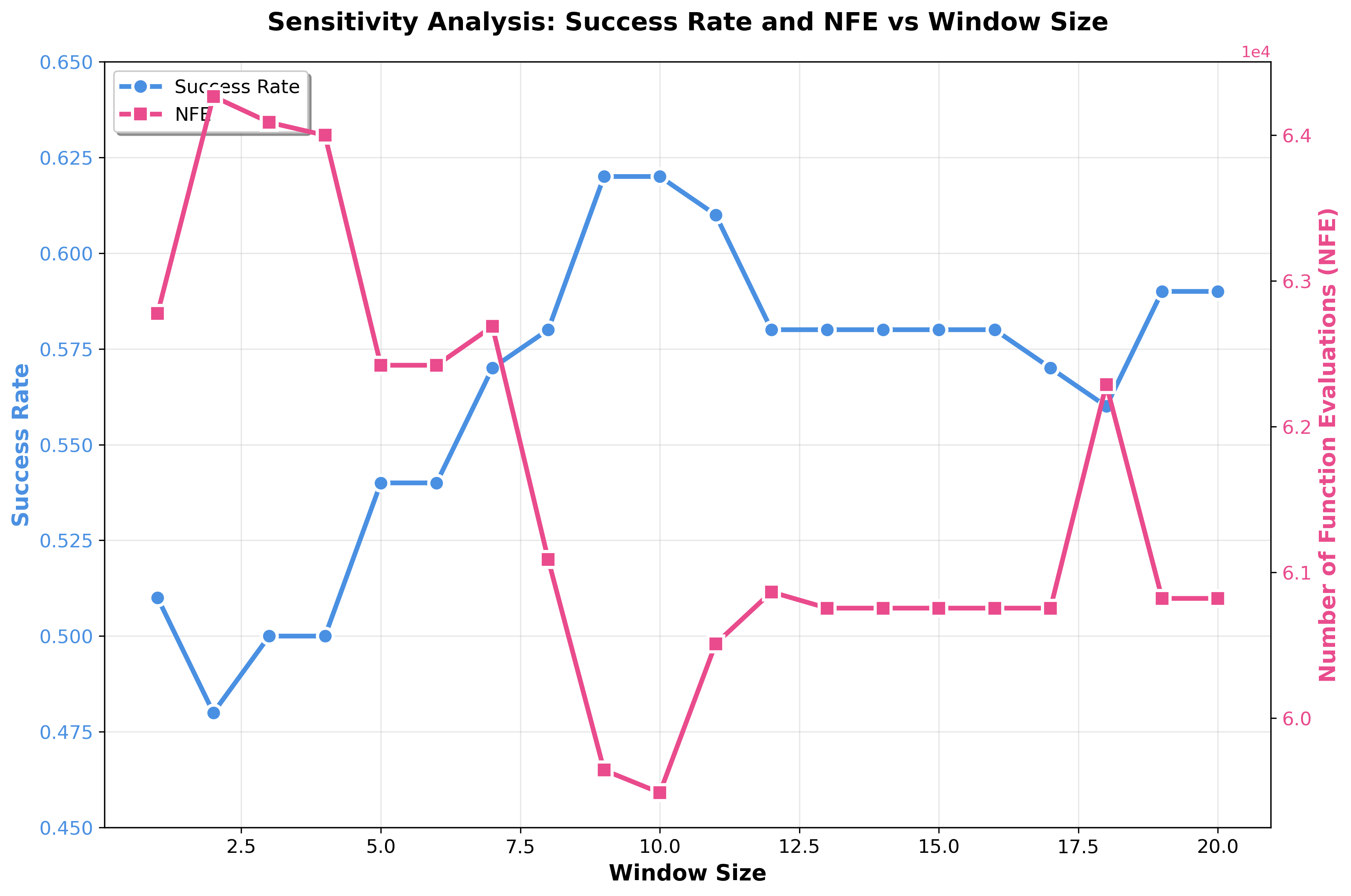}
        \caption{Sensitivity analysis on  window $W$. Small $W$ (1--5) leads to erratic replanning; large $W$ ($>$15) makes the system sluggish. A wide range ($W \in [8, 16]$) yields strong and stable performance.}
        \label{fig:window_sensitivity}
    \end{minipage}
\end{figure}

\paragraph{\method is robust to the smoothing window.}
The choice of smoothing window $W$ affects estimate stability. Small $W$ (e.g., 1--5) yields noisy estimates and erratic replanning; large $W$ (e.g., $>$15) makes the system sluggish. As shown in \cref{fig:window_sensitivity}, \method performs well across a wide range ($W \in [8, 16]$).%, so tuning $W$ is not critical in practice.

\section{Conclusion}
We presented \method, an adaptive replanning layer for neural world-model predictive control. By dynamically contracting the reuse threshold based on model mismatch and local sensitivity, \method achieves substantial planner-side computation reduction while maintaining task performance across image-based and latent planning, as well as  real-world experiments with a Franka robot. These findings position replanning cadence as a critical deployment decision rather than a static parameter.

{
\bibliographystyle{splncs04}
\bibliography{references}
}

%%%%%%%%%%###################################

\appendix
\section{Overview}
This supplementary material is organized to keep the mathematical development contiguous and to move all empirical setup and figures after the theory. \Cref{sec:supp:theoryalg} first collects the shared predictive-control preliminaries, controller pseudocodes, and detailed theoretical results. \Cref{sec:supp:protocol} then summarizes the benchmark setup, transfer protocol, real-world state-based modeling, and corruption settings. \Cref{sec:supp:experiments} finally gathers the additional empirical figures and the exact per-task real-world outcomes underlying the aggregate summaries in the main text.

% The notation follows the main text. In particular, $L_t$ denotes the local Lipschitz sensitivity of the true dynamics, $d_t$ denotes the current-plan deviation, $\widehat L_t$ denotes the online sensitivity proxy in the monitored representation, and $s_t$ denotes the stale-mismatch magnitude used in the proofs. The fixed baselines $\MPC_k^m$ and $\MPC_{k,\epsilon}$ serve as the non-adaptive reference policies. We also distinguish the instantaneous threshold $\tilde\epsilon_t$ used in the theory from the smoothed practical threshold $\epsilon_t$.

\section{Theory and Algorithm Pseudocodes}
\label{sec:supp:theoryalg}
This section gathers the notation, pseudocodes, and proofs in one place. 

\subsection{Predictive-control Preliminaries}
\label{sec:supp:prelims}
This section makes the supplementary material self-contained by recording the shared finite-horizon planning problem, replanning notation, and online quantities used throughout the algorithms and proofs.

\paragraph{True system and cumulative cost.}
For the detailed analysis we make explicit the exogenous parameter sequence hidden inside the compact main-text notation. The executed system evolves as
\begin{equation}
\label{eq:supp_true_system}
x_{t+1}=g_t(x_t,u_t;\xi_t^*),
\qquad
\cost(x_{0:T},u_{0:T-1})\coloneqq \sum_{t=0}^{T-1} f_t(x_t,u_t;\xi_t^*) + F_T(x_T;\xi_T^*).
\end{equation}
The main text suppresses the parameter arguments and writes the same true dynamics and cumulative cost more compactly. In that notation, the learned predictive model $\hat g_t$ is the planner-side realization of the same parameterized family: execution uses the realized parameters $\xi_t^*$, whereas planning uses predicted parameters $\xi_{\tau\mid t}$.

\paragraph{Finite-horizon planning problem.}
For any interval $[t_1,t_2]$, initial state $z$, model-side parameter sequence $\xi_{t_1:t_2-1}$, terminal parameter $\zeta_{t_2}$, and terminal objective $F$, the planner solves
\begin{align}
\label{eq:supp_ftocp}
\min_{y_{t_1:t_2},\,v_{t_1:t_2-1}}\quad
&\sum_{\tau=t_1}^{t_2-1} f_\tau(y_\tau,v_\tau;\xi_\tau) + F(y_{t_2};\zeta_{t_2})\\
\text{s.t.}\quad
&y_{\tau+1}=g_\tau(y_\tau,v_\tau;\xi_\tau),\qquad t_1\le \tau < t_2,\nonumber\\
&y_{t_1}=z.\nonumber
\end{align}
Path constraints can be included in the feasible set or absorbed into indicator costs without changing the notation below. We write
\[
\psi_{t_1}^{t_2}(z,\xi_{t_1:t_2-1},\zeta_{t_2};F)=(y_{t_1:t_2},v_{t_1:t_2-1})
\]
for any optimal solution, and abbreviate the argument list to $\psi_{t_1}^{t_2}(z,\xi;F)$ when the terminal parameter is implicit from context. When the shortened-horizon planner uses the same target-conditioned terminal objective family but with different terminal targets, we keep the family symbol $F$ fixed and vary only the terminal parameter $\zeta_{t_2}$.

\paragraph{Receding-horizon execution and cached plans.}
At online time $t$, the controller solves \eqref{eq:supp_ftocp} on $[t,t^+]$ with $t^+=\min(t+k,T)$ from the current state $x_t$ using the current model-side prediction sequence $\xi_{t:t^+-1\mid t}$. The resulting plan is denoted by $(y_{t:t^+\mid t},v_{t:t^+-1\mid t})$. Standard step-wise MPC executes only the first action $u_t=v_{t\mid t}$ and replans at the next step. If a plan is reused, $p(t)$ denotes the most recent replanning time, and $\hat y_t\equiv y_{t\mid p(t)}$ is the cached nominal state aligned with the current execution step. When $t+k<T$, the terminal objective may encode an intermediate target; the corresponding model-selected terminal state is denoted by $\bar y(\xi_{t+k\mid t})$. When the shortened horizon reaches $T$, the planner uses the episode terminal cost $F_T$.

\paragraph{Predictions and the clairvoyant comparator.}
The symbol $\xi_{\tau\mid t}$ denotes the quantity predicted at planning time $t$ for stage $\tau$, whereas $\xi_\tau^*$ is its realized counterpart. The clairvoyant optimum $\OPT$ is the full-horizon solution of \eqref{eq:supp_ftocp} under the true parameter sequence $\xi^*_{0:T}$. Starting from a current state $x_t$, we write
\[
(x^*_{t:T\mid t},u^*_{t:T-1\mid t})\coloneqq \psi_t^T(x_t,\xi_{t:T-1}^*,\xi_T^*;F_T).
\]
The first action on this continuation is $u^*_{t\mid t}$. If a proof conditions on the most recent replanning time $t'$, we write
\[
u^*_{t\mid t'} \,\coloneqq\, \psi_{t'}^T(x_{t'},\xi_{t':T-1}^*,\xi_T^*;F_T)_{u_t},
\]
for the stage-$t$ action of that continuation.

\paragraph{Monitored representation and online quantities.}
The runtime trigger may be evaluated in a monitored representation rather than directly in the physical state. We write
\[
z_t\coloneqq \phi(x_t),
\qquad
\hat z_t\coloneqq \phi(\hat y_t),
\qquad
d_t\coloneqq \norm{z_t-\hat z_t}_2.
\]
The practical controller also forms the online sensitivity proxy
\begin{equation}
\label{eq:supp_Lhat_prelim}
\widehat L_t\coloneqq \frac{\norm{\phi(x_{t+1})-\phi(x_t)}_2}{\norm{u_t}_2+\gamma},
\end{equation}
using the same monitored representation as the trigger. The planner state $x_t$ and monitored representation $z_t=\phi(x_t)$ need not coincide: in VP2, $x_t$ is image-space while $z_t$ is a frozen image-feature embedding; in TD-MPC2 the planner latent is used for both; and in Franka both coincide with the physical state space.

\paragraph{Controller family and performance metrics.}
All four controllers in this paper solve the same horizon-$k$ planning problem \eqref{eq:supp_ftocp}; they differ only in the replanning schedule. $\MPC_k^1$ replans every step, $\MPC_k^m$ commits a fixed number of actions before refreshing, $\MPC_{k,\epsilon}$ replans whenever $d_t>\epsilon$ or the cached suffix expires, and \method uses the same trigger with a time-varying threshold $\epsilon_t$. The main text abbreviates the standard step-wise controller as $\MPC_k \equiv \MPC_k^1$. We evaluate these controllers by dynamic regret $\mathrm{Regret}(\mathrm{ALG})=\cost(\mathrm{ALG})-\cost(\OPT)$ and by the number of function evaluations (NFE), namely the total number of world-model queries issued by the planner during one episode.

\subsection{Algorithm Pseudocodes}
\label{sec:supp:algorithms}
All three algorithms below invoke the same horizon-$k$ planning problem from \Cref{sec:supp:prelims}; only the rule that refreshes the current plan changes across controllers.

\begin{algorithm}[t]
\caption{Model predictive control with step-wise replanning ($\MPC_k^1$)}
\label{alg:supp_mpc}
\begin{algorithmic}[1]
\Require Prediction horizon $k$, initial state $x_0$, predictive model, planner, terminal cost rule
\For{$t = 0,1,\ldots,T-1$}
    \State Observe the current state $x_t$
    \State Solve the horizon-$k$ planning problem to obtain $(y_{t:t+k}, v_{t:t+k-1})$
    \State Execute the first action $u_t \gets v_{t\mid t}$
\EndFor
\end{algorithmic}
\end{algorithm}

\begin{algorithm}[t]
\caption{Model predictive control with fixed-step reuse ($\MPC_k^m$)}
\label{alg:supp_mpcfr}
\begin{algorithmic}[1]
\Require Prediction horizon $k$, reuse interval $m$, initial state $x_0$, predictive model, planner
\State $t \gets 0$
\While{$t < T$}
    \State Observe the current state $x_t$
    \State Solve the horizon-$k$ planning problem to obtain $(y_{t:t+k}, v_{t:t+k-1})$
    \State Execute the first $m_{\mathrm{commit}}=\min(m,T-t)$ actions of the current plan
    \State $t \gets t + m_{\mathrm{commit}}$
\EndWhile
\end{algorithmic}
\end{algorithm}

\begin{algorithm}[t]
\caption{Model predictive control with fixed-threshold reuse ($\MPC_{k,\epsilon}$)}
\label{alg:supp_mpcft}
\begin{algorithmic}[1]
\Require Prediction horizon $k$, threshold $\epsilon$, initial state $x_0$, predictive model, planner
\State $t \gets 0$, $t_{\mathrm{plan}} \gets 0$
\While{$t < T$}
    \If{$t = t_{\mathrm{plan}}$}
        \State Solve the horizon-$k$ planning problem and cache $(\hat y_{t:t+k}, v_{t:t+k-1})$
    \EndIf
    \State Execute the current suffix action $u_t \gets v_{t\mid t_{\mathrm{plan}}}$ and observe $x_{t+1}$
    \State Compute the next-step deviation $d_{t+1} \gets \norm{\phi(x_{t+1})-\phi(\hat y_{t+1})}_2$
    \If{$d_{t+1} > \epsilon$ \textbf{or} $t+1$ reaches the end of the cached plan}
        \State $t_{\mathrm{plan}} \gets t+1$
    \EndIf
    \State $t \gets t+1$
\EndWhile
\end{algorithmic}
\end{algorithm}
\subsection{Detailed Theoretical Results}
\label{sec:supp:theory}
\subsubsection{Assumptions and notation}
The detailed theory follows a simple chain. \Cref{lem:supp_conditional,lem:supp_state_dev,lem:supp_regret_conditional} derive the fixed-step regret bound in \Cref{thm:supp_fixed_step}. \Cref{lem:supp_perstep_eps,cor:supp_compact_perstep} summarize threshold-based stale-plan error in the compact form used in the main text. \Cref{thm:supp_fixed_eps,prop:supp_adaptive_penalty,thm:supp_adaptive_direct,cor:supp_adaptive_regret} then specialize this picture to the fixed-threshold and adaptive controllers.

Our analysis follows the perturbation framework used in regret analyses of predictive control under model mismatch \cite{lin2022bounded,lin2021perturbation}. We reuse the common planner notation from \Cref{sec:supp:prelims} and introduce only the additional regularity, perturbation, and proof-only mismatch objects needed for the detailed regret bounds.
\FloatBarrier

\newpage
\paragraph{Standard regularity assumptions.}
We assume throughout that:
\begin{itemize}
    \item \textbf{Bounded clairvoyant optimum.} The clairvoyant optimum is bounded: there exists $D_{x^*}>0$ with $\norm{x_t^*}\leq D_{x^*}$ for all $t$.
    \item \textbf{Lipschitz dynamics.} The ground-truth dynamics $g_t(\cdot,\cdot;\xi_t^*)$ are Lipschitz in state and action: there exists $L_t>0$ such that
    \begin{equation}
    \label{eq:supp_dynamics_lip}
    \norm{g_t(x_t,u_t;\xi_t^*)-g_t(x_t',u_t';\xi_t^*)} \leq L_t\big(\norm{x_t-x_t'}+\norm{u_t-u_t'}\big).
    \end{equation}
    Below, we suppress the realized parameter $\xi_t^*$ inside $g_t$ whenever no ambiguity arises.
    \item \textbf{Smooth costs.} Each stage cost and the terminal cost are non-negative and $\ell$-smooth in their arguments \cite{lin2021perturbation,lin2022bounded}.
\end{itemize}

We also write
\[
L \coloneqq \max_{0\le t < T} L_t,
\qquad
L^{(m)} \coloneqq \max_{0\le t\le T-k}\max_{0\le i\le m-1}\prod_{s=t}^{t+i}L_s,
\]
so that $L$ captures the largest single-step local sensitivity and $L^{(m)}$ captures the largest cumulative sensitivity over an $m$-step reuse block. In particular, $L^{(m)}\leq L^m$.

\paragraph{Perturbation bounds.}
Let $\psi_{t_1}^{t_2}(z,\xi;F)$ denote the finite-horizon optimal-control solution from \Cref{eq:supp_ftocp}, initialized at state $z$ with prediction sequence $\xi$ and terminal objective $F$. We assume the planner admits the standard local perturbation bounds used in predictive-control regret analyses.

For \emph{parameter perturbations} with fixed initial state,
\begin{equation}
\label{eq:supp_perturb_initial}
\norm{\psi_{t_1}^{t_2}(z,\xi;F)_{v_t}-\psi_{t_1}^{t_2}(z,\xi';F)_{v_t}}
\leq \Big(\sum_{s=t_1}^{t_2} q_1(s-t_1)\delta_s\Big)\norm{z} + \sum_{s=t_1}^{t_2} q_2(s-t_1)\delta_s,
\end{equation}
where $\delta_s\coloneqq \norm{\xi_s-\xi_s'}$, and the decay kernels satisfy $\sum_{t\ge 0} q_i(t)\le C_i$ for $i\in\{1,2\}$.

For \emph{initial-state perturbations} with fixed parameters,
\begin{equation}
\label{eq:supp_perturb_state}
\norm{\psi_{t_1}^{t_2}(z,\xi;F)_{y_t/v_t}-\psi_{t_1}^{t_2}(z',\xi;F)_{y_t/v_t}}
\le q_3(t-t_1)\norm{z-z'},
\end{equation}
where $\sum_{t\ge 0} q_3(t)\le C_3$.

\paragraph{Local validity region.}
As in the original analysis, we only require these perturbation bounds to hold inside a tube around the clairvoyant optimum. Concretely, there exists $R_1>0$ such that \eqref{eq:supp_perturb_initial} and \eqref{eq:supp_perturb_state} hold whenever the relevant initial states remain in $\mathcal{B}(x_t^*,R_1)$ and the terminal target chosen by the planner stays in a comparable neighborhood of the optimal terminal state. This is the detailed form of the ``local perturbation assumptions'' referred to in the main text.

\paragraph{Prediction parameters and terminal targets.}
We follow the prediction notation from \Cref{sec:supp:prelims}. The only additional convention needed in the proofs is that $\bar y(\xi_{t+k\mid t})$ denotes the terminal state selected by the planner's terminal objective at planning time $t$; by assumption it lies inside the local perturbation tube around the corresponding clairvoyant terminal state.

\paragraph{Monitored representation and trigger space.}
The online trigger and sensitivity proxy from \Cref{sec:supp:prelims} may be evaluated in a monitored representation $z=\phi(x)$ rather than directly in the physical state. To connect $d_t=\norm{\phi(x_t)-\phi(\hat y_t)}_2$ to the state-deviation terms used by the perturbation lemmas, we assume that inside the local tube there exist constants $0<c_\phi\leq C_\phi$ such that
\begin{equation}
\label{eq:supp_phi_bilip}
c_\phi \norm{x-x'} \leq \norm{\phi(x)-\phi(x')} \leq C_\phi \norm{x-x'}
\end{equation}
for all relevant state pairs. Equivalently, one may read the theory below as stated directly in the monitored planning space. When $\phi$ is the identity, \eqref{eq:supp_phi_bilip} is immediate with $c_\phi=C_\phi=1$.

\paragraph{Prediction error and per-step errors.}
The prediction error made at time $t$ for lead time $\tau$ is
\begin{equation}
\rho_{t,\tau} \coloneqq \norm{\xi_{t+\tau\mid t}-\xi^*_{t+\tau}}.
\end{equation}
Building on the comparator notation from \Cref{sec:supp:prelims}, the per-step control error of an online controller is
\begin{equation}
 e_t \coloneqq \norm{u_t-u^*_{t\mid t}},
\end{equation}
and for fixed-step reuse it is convenient to condition on the most recent replanning time $t'$, in which case
\begin{equation}
 e_{t\mid t'} \coloneqq \norm{u_t-u^*_{t\mid t'}}.
\end{equation}

\paragraph{Mismatch statistic used in the condensed bounds.}
We summarize the weighted lead-time mismatch since the last replanning step by
\begin{equation}
 s_t \coloneqq \left(\sum_{\tau=0}^{k-1} \omega_\tau\rho_{p(t),\tau}^2 + c_k^2\right)^{1/2},
\end{equation}
and define the threshold-analysis shorthand
\begin{equation}
E \coloneqq \sum_{t=0}^{T-1}\big(s_t+s_t^2\big).
\label{eq:supp_E_definition}
\end{equation}
The linear part is convenient because the fixed-threshold derivation contains the mixed term $\epsilon\sum_t s_t$, while the quadratic part is the natural mismatch energy. The coefficients $\omega_\tau$ and the terminal term $c_k$ are fixed constants induced by the perturbation decomposition for the chosen horizon; they are not algorithmic parameters and are never tuned in deployment. Their role is only to compress the multi-step mismatch terms into a single shorthand. When a plan is cached at time $t'$, we write $\xi_{t:t'+k-1\mid t'}$ for the model-side parameter sequence reused by that cached plan, and $\bar y(\xi_{t'+k\mid t'})$ for the terminal target predicted by that cached rollout. These quantities appear only in the perturbation lemmas below; the practical controller itself monitors only the online signals $d_t$ and $\widehat L_t$.

\paragraph{Proof strategy and relation to the online trigger.}
The proofs below adapt the perturbation-based dynamic-regret route of \cite{lin2021perturbation,lin2022bounded} to cached-plan reuse. Reusing a plan with tolerance $\varepsilon_t$ incurs per-step execution error of order $O(\varepsilon_t+s_t)$, where $s_t$ summarizes the mismatch accumulated under the cached plan. Those per-step deviations are then passed through the same regret reduction used in prior predictive-control analyses. Because $s_t$ depends on the cached rollout's multi-step prediction errors $\rho_{p(t),\tau}$ and on fixed constants folded into its definition, the controller does not evaluate it directly; it monitors the observable deviation $d_t$ instead. Once stale-plan error is propagated through the dynamics, the threshold-dependent regret terms take the form $L_t^2\varepsilon_t s_t$ and $L_t^2\varepsilon_t^2$. The exponential threshold is chosen because the elementary bounds $xe^{-ax}\le (ea)^{-1}$ and $x^2e^{-ax}\le 4(e^2a^2)^{-1}$ convert these terms into uniform bounds controlled by $(\epsilon_0,\alpha_\delta,\alpha_L)$.

\subsubsection{Detailed theorem statements}
\begin{lemma}[Conditional per-step error bound for fixed-step reuse]
\label{lem:supp_conditional}
Assume the local perturbation bounds hold and let $t'=mn\leq t < m(n+1)$. Suppose the state at the last replanning time satisfies $x_{t'}\in\mathcal{B}(x_{t'}^*,R_1)$, and the terminal target used inside the planner lies in $\mathcal{B}(x_{t'+k}^*,R_2)$ for some $R_2\geq R_1$. Then the conditional per-step error of $\MPC_k^m$ satisfies
\begin{equation}
\label{eq:supp_conditional_bound}
 e_{t\mid t'} \leq \sum_{\tau=0}^{k-1}\big((R_1+D_{x^*})q_1(\tau)+q_2(\tau)\big)\rho_{t',\tau}
 + 2R_2\big((R_1+D_{x^*})q_1(k)+q_2(k)\big).
\end{equation}
\end{lemma}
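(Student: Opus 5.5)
The bound compares two solutions of the planning problem \eqref{eq:supp_ftocp} that start from the same state $x_{t'}$. Under $\MPC_k^m$, the action executed at stage $t\in[t',t'+m)$ is the stage-$t$ entry of the plan computed at $t'$:
\[
u_t=\psi_{t'}^{t'+k}\big(x_{t'},\xi_{t':t'+k-1\mid t'},\bar y(\xi_{t'+k\mid t'});F\big)_{v_t}.
\]
The comparator $u^*_{t\mid t'}$ is the stage-$t$ entry of the clairvoyant continuation $\psi_{t'}^T(x_{t'},\xi^*_{t':T-1},\xi^*_T;F_T)$.

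The first step is to rewrite this comparator as a horizon-$k$ problem. By the principle of optimality, truncating the clairvoyant continuation at $t'+k$ and pinning the terminal state to the clairvoyant state $x^*_{t'+k\mid t'}$ reproduces the same stage actions. So
\[
u^*_{t\mid t'}=\psi_{t'}^{t'+k}\big(x_{t'},\xi^*_{t':t'+k-1},x^*_{t'+k\mid t'};F\big)_{v_t},
\]
with the same target-conditioned terminal family $F$. This follows the standard construction of \cite{lin2021perturbation,lin2022bounded}. After this step, both actions come from the same finite-horizon map with the same initial state. They differ only in the stage parameters $\xi_{t'+\tau\mid t'}$ versus $\xi^*_{t'+\tau}$ for $\tau<k$, and in the terminal parameter $\bar y(\xi_{t'+k\mid t'})$ versus $x^*_{t'+k\mid t'}$.

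The second step applies the parameter-perturbation bound \eqref{eq:supp_perturb_initial} to the whole parameter vector at once. For $\tau<k$ we have $\delta_{t'+\tau}=\rho_{t',\tau}$. The terminal coordinate sits at lag $k$, and its perturbation is bounded by the triangle inequality through the clairvoyant terminal state. Both $\bar y(\xi_{t'+k\mid t'})$ and $x^*_{t'+k\mid t'}$ lie in $\mathcal{B}(x^*_{t'+k},R_2)$, the first by hypothesis and the second by the tube assumption with $R_1\le R_2$. Hence $\delta_{t'+k}\le 2R_2$.

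The third step bounds the prefactor $\norm{z}$ in \eqref{eq:supp_perturb_initial}. Here $z=x_{t'}\in\mathcal{B}(x^*_{t'},R_1)$ and $\norm{x^*_{t'}}\le D_{x^*}$, so $\norm{z}\le R_1+D_{x^*}$. Substituting gives
\[
e_{t\mid t'}\le\sum_{\tau=0}^{k}\big((R_1+D_{x^*})q_1(\tau)+q_2(\tau)\big)\delta_{t'+\tau}.
\]
Splitting off the $\tau=k$ term yields exactly \eqref{eq:supp_conditional_bound}.

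The main obstacle is the first step: justifying that the truncated comparator is a legitimate instance of $\psi_{t'}^{t'+k}$ to which \eqref{eq:supp_perturb_initial} applies. This needs two things. The terminal-target family $F$ must be able to encode $x^*_{t'+k\mid t'}$ as its parameter $\zeta$. Both problems must also stay inside the local validity region. I would handle this by invoking the local-validity assumption on initial states and terminal targets stated above. I would also note that the kernels $q_i$ are indexed by lag from $t_1=t'$, so the bound is independent of the offset $t-t'$ within the reuse block.
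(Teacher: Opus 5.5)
Your proposal is correct and is essentially the paper's proof, only more explicit: use the principle of optimality to recast the clairvoyant comparator as a horizon-$k$ problem anchored at $t'$ with terminal target $x^*_{t'+k\mid t'}$, apply \eqref{eq:supp_perturb_initial} with $\norm{x_{t'}}\le R_1+D_{x^*}$, and bound the terminal discrepancy $\norm{\bar y(\xi_{t'+k\mid t'})-x^*_{t'+k\mid t'}}$ by $2R_2$. One small imprecision, which the paper shares: $x^*_{t'+k\mid t'}\in\mathcal{B}(x^*_{t'+k},R_2)$ does not follow from the tube assumption, because that assumption concerns executed states rather than the clairvoyant continuation from $x_{t'}$; it instead needs \eqref{eq:supp_perturb_state}, which gives $\norm{x^*_{t'+k\mid t'}-x^*_{t'+k}}\le q_3(k)R_1$, together with $q_3(k)R_1\le R_2$ (e.g.\ a long enough horizon), whereas the paper's proof simply asserts this membership.
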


\begin{proof}
For $t'=mn \leq t < m(n+1)$, the conditional error can be written as the difference between the action supplied by the plan optimized at time $t'$ and the clairvoyant action for the same stage. By the principle of optimality, the relevant portion of the clairvoyant plan is a sub-trajectory of the horizon-$k$ planning problem anchored at $t'$. Applying the parameter-perturbation bound therefore yields
\begin{align}
 e_{t\mid t'}
 &\leq \sum_{\tau=0}^{k-1}\big(\norm{x_{t'}}q_1(\tau)+q_2(\tau)\big)\rho_{t',\tau} \nonumber\\
 &\quad + \big(\norm{x_{t'}}q_1(k)+q_2(k)\big)\norm{\bar y(\xi_{t'+k\mid t'})-x^*_{t'+k\mid t'}}.
\end{align}
Because $\norm{x_{t'}}\leq R_1+D_{x^*}$ and both terminal states lie in the radius-$R_2$ tube around $x_{t'+k}^*$, the final term is bounded by the constant term in \eqref{eq:supp_conditional_bound}. This proves the claim.
\end{proof}

\begin{lemma}[State deviation bound]
\label{lem:supp_state_dev}
Let $t'=mn\leq t < m(n+1)$ and let $x^*_{\tau\mid t'}$ denote the state at time $\tau$ on the clairvoyant optimal trajectory starting from $x_{t'}$. Under the Lipschitz dynamics assumption,
\begin{equation}
\label{eq:supp_state_dev}
\norm{x_t-x^*_{t\mid t'}} \leq \sum_{\tau=t'}^{t-1} e_{\tau\mid t'} \prod_{s=\tau+1}^{t-1} L_s.
\end{equation}
\end{lemma}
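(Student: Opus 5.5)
The plan is a one-step recursion for the deviation $\Delta_\tau \coloneqq \norm{x_\tau - x^*_{\tau\mid t'}}$ along the reuse block $t'\le\tau\le t$, which is then unrolled. Both trajectories start from the same state at the replanning time: $x^*_{t'\mid t'}=x_{t'}$ by the definition of the clairvoyant continuation in \Cref{sec:supp:prelims}. Hence $\Delta_{t'}=0$. For $t=t'$ the sum in \eqref{eq:supp_state_dev} is empty and the claim is trivial.

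For the inductive step, write $x_{\tau+1}=g_\tau(x_\tau,u_\tau)$ and $x^*_{\tau+1\mid t'}=g_\tau(x^*_{\tau\mid t'},u^*_{\tau\mid t'})$. The second identity holds because the clairvoyant continuation is feasible for the true dynamics. I will not apply \eqref{eq:supp_dynamics_lip} to both arguments at once. Doing so gives $\Delta_{\tau+1}\le L_\tau(\Delta_\tau+e_{\tau\mid t'})$, which puts an extra factor $L_\tau$ on each $e_{\tau\mid t'}$ and yields a weaker bound than the one stated. Instead I insert the intermediate point $g_\tau(x^*_{\tau\mid t'},u_\tau)$ and use the triangle inequality:
\[
\Delta_{\tau+1}\le \norm{g_\tau(x_\tau,u_\tau)-g_\tau(x^*_{\tau\mid t'},u_\tau)}+\norm{g_\tau(x^*_{\tau\mid t'},u_\tau)-g_\tau(x^*_{\tau\mid t'},u^*_{\tau\mid t'})}.
\]
The first term is the state channel. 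It is bounded by $L_\tau\Delta_\tau$ via \eqref{eq:supp_dynamics_lip} with equal actions. The second term is the action channel. It must be bounded by exactly $e_{\tau\mid t'}=\norm{u_\tau-u^*_{\tau\mid t'}}$, with no multiplier, which gives the recursion $\Delta_{\tau+1}\le L_\tau\Delta_\tau+e_{\tau\mid t'}$.

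Unrolling this recursion from $\Delta_{t'}=0$ by induction on $t$ gives the claim. Suppose $\Delta_t\le\sum_{\tau=t'}^{t-1}e_{\tau\mid t'}\prod_{s=\tau+1}^{t-1}L_s$. Then
\[
\Delta_{t+1}\le L_t\sum_{\tau=t'}^{t-1}e_{\tau\mid t'}\prod_{s=\tau+1}^{t-1}L_s+e_{t\mid t'}=\sum_{\tau=t'}^{t}e_{\tau\mid t'}\prod_{s=\tau+1}^{t}L_s .
\]
Here the new term $\tau=t$ carries the empty product $1$. This is exactly \eqref{eq:supp_state_dev} at $t+1$.

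\textbf{Main obstacle.} The delicate step is the action-channel bound with unit gain. The assumption \eqref{eq:supp_dynamics_lip} as written only gives $L_\tau e_{\tau\mid t'}$ for this term. The sanity check $t=t'+1$ shows the issue: the true deviation is $\norm{g_{t'}(x_{t'},u_{t'})-g_{t'}(x_{t'},u^*_{t'\mid t'})}$, while the stated bound is $e_{t'\mid t'}$. So the displayed inequality holds precisely when the one-step input-to-state gain is at most one. I would therefore make explicit the reading under which \eqref{eq:supp_state_dev} is intended. Take $L_\tau$ to be the state Lipschitz constant, and take actions scaled (as in the normalized input parametrization of \cite{lin2021perturbation,lin2022bounded}) so that the action channel is $1$-Lipschitz. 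I would state this as a refinement of \eqref{eq:supp_dynamics_lip}, $\norm{g_\tau(x,u)-g_\tau(x',u')}\le L_\tau\norm{x-x'}+\norm{u-u'}$. Under this refinement the recursion and the unrolling above go through verbatim.

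If the refinement is not adopted, the argument above still gives the bound with an action-channel constant $L^u_\tau$ multiplying each $e_{\tau\mid t'}$. The lemma then holds as displayed whenever $L^u_\tau\le 1$.
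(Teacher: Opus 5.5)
Your diagnosis is right, and it exposes an error in the paper's own proof rather than a gap in yours. The paper uses exactly the route you reject: it applies \eqref{eq:supp_dynamics_lip} jointly. Its base case therefore reads $\norm{x_{t'+1}-x^*_{t'+1\mid t'}}\le L_{t'}e_{t'\mid t'}$, although the right-hand side of \eqref{eq:supp_state_dev} at $t=t'+1$ is $e_{t'\mid t'}$. Its inductive step $L_{t-1}\big(\norm{x_{t-1}-x^*_{t-1\mid t'}}+e_{t-1\mid t'}\big)$ likewise attaches $L_{t-1}$ to the newest term, and is then equated with the stated sum. What that argument actually proves is
\[
\norm{x_t-x^*_{t\mid t'}}\le \sum_{\tau=t'}^{t-1} e_{\tau\mid t'}\prod_{s=\tau}^{t-1}L_s .
\]
The displayed lemma is false in general under the stated hypothesis. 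Take scalar $g_t(x,u)=L(x+u)$ with $L>1$: it satisfies \eqref{eq:supp_dynamics_lip} with $L_t=L$ and no smaller constant, yet $\norm{x_{t'+1}-x^*_{t'+1\mid t'}}=L\,e_{t'\mid t'}>e_{t'\mid t'}$ whenever $u_{t'}\neq u^*_{t'\mid t'}$. Your channel-splitting recursion and its unrolling are sound, and starting the induction at $t=t'$ with zero deviation is cleaner than the paper's base case. Your fallback with an action gain $L^u_\tau$ is the correct general statement, and setting $L^u_\tau=L_\tau$ recovers exactly the shifted-product bound above.

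Of your two repairs, I would adopt the fallback (keep the hypothesis, shift the product) rather than your main line (keep the display, strengthen the hypothesis), for three reasons.

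First, the paper states the sensitivity assumption in the joint form, both in \eqref{eq:local_lipschitz_main} and \eqref{eq:supp_dynamics_lip}. Your inequality $\norm{g_\tau(x,u)-g_\tau(x',u')}\le L_\tau\norm{x-x'}+\norm{u-u'}$ is therefore an added hypothesis. State it as such rather than attributing it to a normalization in \cite{lin2021perturbation,lin2022bounded}.

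Second, rescaling inputs so the action channel is $1$-Lipschitz does not remove the constant. In rescaled units, $e_{\tau\mid t'}$ is multiplied by the action gain, and so are the kernels $q_1,q_2$ that bound it in \cref{lem:supp_conditional}. The factor simply reappears there.

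Third, the shifted product has between $1$ and $m-1$ consecutive factors, so it is bounded by $L^{(m)}$ as defined, which is exactly what \cref{lem:supp_regret_conditional} uses. The displayed form contains the empty product $1$ at $\tau=t-1$, which is dominated by $L^{(m)}$ only when $L^{(m)}\ge 1$. So even when all $L_s\le 1$, where the display is true, it does not feed the downstream bound cleanly.
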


\begin{proof}
The claim is proved by induction. For $t=t'+1$,
\[
\norm{x_{t'+1}-x^*_{t'+1\mid t'}} = \norm{g_{t'}(x_{t'},u_{t'})-g_{t'}(x_{t'},u^*_{t'\mid t'})}
\leq L_{t'} e_{t'\mid t'}.
\]
Assume the result holds at time $t-1$. Then, using \eqref{eq:supp_dynamics_lip},
\begin{align}
\norm{x_t-x^*_{t\mid t'}}
&= \norm{g_{t-1}(x_{t-1},u_{t-1})-g_{t-1}(x^*_{t-1\mid t'},u^*_{t-1\mid t'})} \\
&\leq L_{t-1}\Big(\norm{x_{t-1}-x^*_{t-1\mid t'}} + \norm{u_{t-1}-u^*_{t-1\mid t'}}\Big) \\
&\leq \sum_{\tau=t'}^{t-1} e_{\tau\mid t'}\prod_{s=\tau+1}^{t-1} L_s.
\end{align}
This completes the induction.
\end{proof}

\begin{lemma}[Regret from conditional per-step errors]
\label{lem:supp_regret_conditional}
Assume first that $T=mN$ and define
\[
S_m \coloneqq \sum_{i=0}^{N-1}\sum_{\tau=mi}^{m(i+1)-1} e_{\tau\mid mi}^2.
\]
Then the dynamic regret of $\MPC_k^m$ is bounded by
\begin{align}
\cost(\MPC_k^m)-\cost(\OPT)
&\leq \sqrt{\left(\frac{\ell}{2}\cdot 2m\big(L^{(m)}\big)^2C_3^2\right)\cost(\OPT)}\,\sqrt{S_m} \nonumber\\
&\quad + \left(\frac{\ell}{2}\cdot 2m\big(L^{(m)}\big)^2C_3^2\right) S_m.
\label{eq:supp_conditional_regret}
\end{align}
\end{lemma}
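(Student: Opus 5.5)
The plan follows the standard perturbation-to-regret reduction of \cite{lin2021perturbation,lin2022bounded}, with the single-step comparator replaced by the block-anchored comparator $u^*_{\cdot\mid mi}$.

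\textbf{Step 1: per-block cost deviation.} For each block $i$, compare the executed trajectory on $[mi,m(i+1))$ with the clairvoyant continuation started at $x_{mi}$. Summing across blocks, the deviations $\norm{x_t-x^*_{t\mid mi}}$ and $\norm{u_t-u^*_{t\mid mi}}$ are controlled by the conditional errors $e_{\tau\mid mi}$.

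\textbf{Step 2: square the state deviation.} By \Cref{lem:supp_state_dev}, within a block
\[
\norm{x_t-x^*_{t\mid mi}}\le L^{(m)}\sum_{\tau=mi}^{t-1}e_{\tau\mid mi}.
\]
Cauchy--Schwarz then gives
\[
\norm{x_t-x^*_{t\mid mi}}^2\le m\,(L^{(m)})^2\sum_{\tau}e_{\tau\mid mi}^2 .
\]
Summing over $t$ in the block and over blocks yields a bound of order $m(L^{(m)})^2S_m$. To keep a single power of $m$ (rather than $m^2$), the sum over $t$ must be handled through the decay kernel $q_3$. Specifically, the effect of a perturbation at the replanning state on the comparator continuation is propagated by \eqref{eq:supp_perturb_state}, which contributes the factor $C_3$, and $\sum_t q_3\le C_3$ absorbs the sum over later stages. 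Adding the action error $\sum e^2$ and using $L^{(m)}\ge 1$ to fold it in produces the factor $2$. This gives the perturbation energy
\[
\Delta\coloneqq 2m(L^{(m)})^2C_3^2S_m .
\]

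\textbf{Step 3: link the chained comparators to $\OPT$.} The block-wise comparators must be tied to $\cost(\OPT)$. Following the standard argument, define a telescoping sequence of hybrid trajectories: follow ALG up to the start of block $i$, then follow the clairvoyant continuation. Each switch changes the cost by an amount controlled through $\ell$-smoothness and nonnegativity of the costs. The key inequality is that for nonnegative $\ell$-smooth $h$,
\[
h(a)\le h(b)+\sqrt{2\ell\, h(b)}\,\norm{a-b}+\tfrac{\ell}{2}\norm{a-b}^2 ,
\]
using $\norm{\nabla h(b)}^2\le 2\ell h(b)$. Summing over stages and applying Cauchy--Schwarz to the cross terms gives
\[
\cost(\MPC_k^m)\le \cost(\OPT)+\sqrt{2\ell\,\cost(\OPT)}\sqrt{\textstyle\sum\norm{\cdot}^2}+\tfrac{\ell}{2}\textstyle\sum\norm{\cdot}^2 .
\]
Substituting $\sum\norm{\cdot}^2\le \Delta$ gives the stated form \eqref{eq:supp_conditional_regret} up to the constants written in the lemma.

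\textbf{Main obstacle.} The hardest step is Step 3. There I must make sure that the intermediate continuations have costs upper-bounded by $\cost(\OPT)$; this follows from optimality of $\OPT$ only after accounting for the changed initial states. I must also make sure that the deviation accumulated across many blocks does not compound beyond the $C_3$ factor. I would handle this as in \cite{lin2022bounded}. First, bound the trajectory of ALG directly against $\OPT$ via the initial-state perturbation bound \eqref{eq:supp_perturb_state}, so that the total deviation $\sum_t\norm{x_t-x_t^*}^2$ is at most $C_3^2$ times the sum of squared per-block injected deviations. Second, invoke the small-error tube condition to keep every $x_{mi}$ in $\mathcal B(x^*_{mi},R_1)$.
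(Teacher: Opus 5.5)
Your route is the paper's: bound the perturbation energy $A=\sum_t\norm{x_t-x_t^*}^2+\sum_t\norm{u_t-u_t^*}^2$ by $2m(L^{(m)})^2C_3^2S_m$, using \cref{lem:supp_state_dev} within blocks and the kernel $q_3$ across blocks, then apply the smoothness-based regret inequality \cite[Lemma 1]{lin2021perturbation}. Your re-derivation of that inequality from $\norm{\nabla h(b)}^2\le 2\ell h(b)$ is sound. It gives $\sqrt{2\ell\,\cost(\OPT)A}+\tfrac{\ell}{2}A$, and that constant is correct. The $\sqrt{\tfrac{\ell}{2}\cost(\OPT)A}$ in the paper's proof is too small by a factor $2$: one stage with $h(x)=\tfrac{\ell}{2}x^2$, $b=1$, $a=1+d$ gives $h(a)-h(b)=\ell d+\tfrac{\ell}{2}d^2$. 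So you cannot match the lemma's literal constant this way, and should not try. Your hybrid-trajectory worry in Step 3 is misplaced. That inequality compares the executed trajectory with $\OPT$ stage by stage, so no intermediate continuation cost is ever compared with $\cost(\OPT)$; the telescoping over block-anchored continuations belongs in Step 2. Also, $L^{(m)}\ge 1$ is not assumed (only $L_t>0$), so the action term should be kept as a separate $O(S_m)$ contribution rather than folded in.

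The genuine gap is the single power of $m$ in Step 2. The kernel $q_3$ compares clairvoyant continuations started from different initial states. It therefore controls only the deviation injected at block boundaries, $D_i\coloneqq\norm{x_{m(i+1)}-x^*_{m(i+1)\mid mi}}$, and its influence on later stages. There your idea works: $\sum_t\big(\sum_i q_3(t-m(i+1))D_i\big)^2\le C_3^2\sum_iD_i^2\le C_3^2m(L^{(m)})^2S_m$. Inside a block nothing is replanned, so $W_t\coloneqq\norm{x_t-x^*_{t\mid mi}}$ is propagated by the true dynamics alone, which carry no decay. Over one block, the best \cref{lem:supp_state_dev} gives is $\sum_t W_t^2\le (L^{(m)})^2\tfrac{m(m-1)}{2}\sum_\tau e_{\tau\mid mi}^2$, and this is sharp.

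Take $x_{t+1}=x_t+u_t$ (so $L_t\equiv 1$), $x_0=0$, quadratic costs with $\OPT\equiv 0$, and a single block $m=T$ whose stale actions are offset by a fixed vector $c$; a biased predicted tracking target produces exactly this. Then $W_t=t\norm{c}$, so $A\approx m^3\norm{c}^2/3$, while $2m(L^{(m)})^2C_3^2S_m=2C_3^2m^2\norm{c}^2$. Since the regret here equals $\tfrac{\ell}{2}A$, even the lemma's conclusion fails once $m$ exceeds roughly $6C_3^2$.

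So no kernel argument can deliver your $\Delta$. What this route actually proves is $A=O\big((1+m(m+C_3^2)(L^{(m)})^2)S_m\big)$. The paper's proof does not close this either; it simply asserts the single-$m$ bound ``after squaring and summing.'' A correct completion of your plan must therefore carry the $m^2$ within-block factor, or add an assumption under which within-block deviations decay.
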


\begin{proof}
Combine \cref{lem:supp_conditional,lem:supp_state_dev} to bound the state and action deviations relative to the clairvoyant optimum by weighted sums of conditional per-step errors. After squaring and summing over time, one obtains
\[
A \coloneqq \sum_{t=1}^{T}\norm{x_t-x_t^*}^2 + \sum_{t=0}^{T-1}\norm{u_t-u_t^*}^2
\leq 2m\big(L^{(m)}\big)^2C_3^2\sum_{i=0}^{N-1}\sum_{\tau=mi}^{m(i+1)-1} e_{\tau\mid mi}^2.
\]
For nonnegative $\ell$-smooth stage and terminal costs, the standard smoothness-based dynamic-regret inequality \cite[Lemma 1]{lin2021perturbation} gives
\[
\cost(\MPC_k^m)-\cost(\OPT)
\leq \sqrt{\frac{\ell}{2}\cost(\OPT)A} + \frac{\ell}{2}A.
\]
Substituting the bound on $A$ yields \eqref{eq:supp_conditional_regret}. If $T$ is not divisible by $m$, the same argument is applied to the final partial block of length smaller than $m$; that remainder contributes only lower-order terms absorbed into the same big-$O$ constants used later in \cref{thm:supp_fixed_step}.
\end{proof}

\begin{theorem}[Regret bound for fixed-step reuse]
\label{thm:supp_fixed_step}
Under the assumptions above, if the prediction errors are sufficiently small and the planning horizon is sufficiently long so that the executed trajectory remains in the local tube $\mathcal{B}(x_t^*,R_1)$ for all $t$, then
\begin{equation}
\cost(\MPC_k^m)-\cost(\OPT)
= O\!\left(\sqrt{m\big(L^{(m)}\big)^2\cost(\OPT)E_m}+m\big(L^{(m)}\big)^2E_m\right),
\end{equation}
where
\begin{align}
E_m
&= O\!\Bigg(\sum_{\tau=0}^{k-1}\big((R_1+D_{x^*})q_1(\tau)+q_2(\tau)\big)^2
\sum_{n=0}^{N-1} m\rho_{mn,\tau}^2 \nonumber\\
&\qquad\qquad + \big((R_1+D_{x^*})q_1(k)+q_2(k)\big)^2T\Bigg).
\end{align}
\end{theorem}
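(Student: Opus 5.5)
The proof assembles \cref{lem:supp_conditional,lem:supp_state_dev,lem:supp_regret_conditional}. The only new work is to bound the conditional error energy $S_m$ by $E_m$ and to check that the small-error hypothesis keeps every lemma within its domain of validity.

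\textbf{Step 1: per-step error.} Fix a block $[mn,m(n+1))$ with replanning time $t'=mn$. By the tube hypothesis, $x_{t'}\in\mathcal{B}(x_{t'}^*,R_1)$ and the terminal target lies in $\mathcal{B}(x^*_{t'+k},R_2)$. \Cref{lem:supp_conditional} therefore gives, for every $t$ in the block,
\[
e_{t\mid t'}\le \sum_{\tau=0}^{k-1}a_\tau\rho_{t',\tau}+2R_2a_k,
\qquad a_\tau\coloneqq (R_1+D_{x^*})q_1(\tau)+q_2(\tau).
\]

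\textbf{Step 2: square and sum.} Square this bound and use Cauchy--Schwarz in the form $(\sum_{\tau} a_\tau\rho_\tau + c)^2\le (k+1)\big(\sum_\tau a_\tau^2\rho_\tau^2+c^2\big)$. A weighted Cauchy--Schwarz with weights $a_\tau$ and $\sum_\tau a_\tau\le (R_1+D_{x^*})C_1+C_2$ is better, because it avoids a factor of $k$. The right-hand side depends only on $t'$, not on $t$, so summing over the $m$ steps of the block multiplies it by $m$. Summing over blocks then gives
\[
S_m\lesssim \sum_{\tau=0}^{k-1}a_\tau^2\sum_{n=0}^{N-1}m\rho_{mn,\tau}^2+a_k^2\,R_2^2\,T = O(E_m),
\]
where $R_2^2$ is absorbed into the constant.

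\textbf{Step 3: regret.} Substitute $S_m=O(E_m)$ into \eqref{eq:supp_conditional_regret} of \cref{lem:supp_regret_conditional}. The constants $\ell$ and $C_3$ go into the big-$O$, which yields the claimed $\sqrt{m(L^{(m)})^2\cost(\OPT)E_m}+m(L^{(m)})^2E_m$. If $m\nmid T$, the final partial block is handled as in the remark after \cref{lem:supp_regret_conditional}: it contributes at most the same per-step bound over fewer than $m$ steps.

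\textbf{Main obstacle: validity of the local tube.} The lemmas need the tube hypothesis at every replanning time, and that hypothesis depends on the very errors being bounded. I would argue by induction over blocks. If the tube holds up to block $n$, then \cref{lem:supp_state_dev} combined with Step 1 bounds $\norm{x_t-x^*_t}$ within block $n$ by $mL^{(m)}\big(\sum_\tau a_\tau\rho_{mn,\tau}+2R_2a_k\big)$. To close the induction, one also needs to pass from the continuation $x^*_{t\mid t'}$ to the global optimum $x^*_t$. That step uses \eqref{eq:supp_perturb_state} with the decay $\sum_t q_3(t)\le C_3$ to control how the initial-state offset $x_{t'}-x^*_{t'}$ propagates.

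The theorem's assumptions that prediction errors are small and the horizon is long are used exactly here. Small prediction errors make the $\rho$ terms small, and a long horizon makes $q_1(k),q_2(k)$ small. Together these keep the deviation below $R_1$, so the induction closes. Making this quantitative would require explicit smallness constants. Since the theorem only asserts the bound under the tube hypothesis, I would state it as a standing consequence rather than derive the thresholds.
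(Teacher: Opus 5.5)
Your proposal matches the paper's proof. You apply the conditional per-step bound on each block, square and sum (the bound depends only on $t'=mn$, which gives the factor $m$ and the terminal $T$ term), substitute into the conditional-regret lemma, and treat tube invariance as an inductive consequence of the small-error hypothesis. One small correction: the weighted Cauchy--Schwarz you call ``better'' produces weights $a_\tau$ rather than $a_\tau^2$, so it does not give the stated $E_m$; use the plain version and absorb the factor $k+1$ into the $O(\cdot)$, which is what the paper's brief ``squaring yields terms proportional to $\rho_{mn,\tau}^2$'' implicitly does.
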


\begin{proof}
The proof combines \cref{lem:supp_conditional,lem:supp_regret_conditional}. The small-error condition ensures inductively that the trajectory never leaves the local tube around the clairvoyant optimum, so the perturbation bounds remain valid at every replanning epoch. Squaring \eqref{eq:supp_conditional_bound} yields mismatch terms proportional to $\rho_{mn,\tau}^2$ together with a terminal constant. Summing those terms over all steps executed after replanning time $mn$ produces the stated expression for $E_m$. Substituting this bound into \eqref{eq:supp_conditional_regret} gives the theorem.
\end{proof}

\begin{lemma}[Per-step error for threshold-based reuse]
\label{lem:supp_perstep_eps}
Suppose the current state satisfies $x_t\in\mathcal{B}(x_t^*,R_1)$ and the terminal target used by the planner lies in $\mathcal{B}(x_{t+k}^*,R_2)$. Let $t'$ denote the most recent planning time, and let $\varepsilon_t$ denote a generic nonnegative reuse tolerance. Then the per-step error of both $\MPC_{k,\epsilon}$ and the idealized controller underlying \method is bounded, after absorbing the fixed representation-equivalence constant from \eqref{eq:supp_phi_bilip} when $\phi\neq I$, by
\begin{equation}
\label{eq:supp_perstep_eps}
 e_t \leq q_3(0)\varepsilon_t + \sum_{\tau=0}^{k-1}\big((R_1+D_{x^*})q_1(\tau)+q_2(\tau)\big)\rho_{t',\tau}
 + 2R_2\big((R_1+D_{x^*})q_1(k)+q_2(k)\big).
\end{equation}
\end{lemma}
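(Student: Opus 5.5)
The plan is a triangle-inequality decomposition through an intermediate "fresh-plan" action, followed by the two perturbation bounds.

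\textbf{Setup.} Fix $t$ with most recent planning time $t'\le t$. The executed action is the cached one, $u_t = v_{t\mid t'}$. This is the stage-$t$ action of $\psi_{t'}^{t'+k}(x_{t'},\xi_{\cdot\mid t'};F)$. By the principle of optimality, its suffix from $t$ onward is the optimal plan of the same parameter sequence restarted from the cached nominal state $\hat y_t = y_{t\mid t'}$. So we may write
\[
u_t=\psi_t^{t'+k}(\hat y_t,\xi_{t:t'+k-1\mid t'},\bar y(\xi_{t'+k\mid t'});F)_{v_t}.
\]

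\textbf{Intermediate action.} Introduce the action $\tilde u_t$ that the same truncated problem would produce from the true current state:
\[
\tilde u_t=\psi_t^{t'+k}(x_t,\xi_{t:t'+k-1\mid t'},\bar y(\xi_{t'+k\mid t'});F)_{v_t}.
\]
The triangle inequality then gives
\[
e_t\le \norm{u_t-\tilde u_t}+\norm{\tilde u_t-u^*_{t\mid t}}.
\]

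\textbf{First term.} The two plans differ only in their initial state. The initial-state perturbation bound \eqref{eq:supp_perturb_state} at lead $0$ gives
\[
\norm{u_t-\tilde u_t}\le q_3(0)\norm{x_t-\hat y_t}.
\]
The reuse rule guarantees $d_t=\norm{\phi(x_t)-\phi(\hat y_t)}\le \varepsilon_t$, since otherwise we would have replanned. The lower bi-Lipschitz inequality \eqref{eq:supp_phi_bilip} then yields $\norm{x_t-\hat y_t}\le \varepsilon_t/c_\phi$. Absorbing $1/c_\phi$ gives the $q_3(0)\varepsilon_t$ term. If $t=t'$, this term is simply zero.

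\textbf{Second term.} Here the initial state $x_t$ is shared, and the plans differ only in the parameter sequence and the terminal target. The clairvoyant continuation $u^*_{t\mid t}$ also solves the truncated problem with true parameters and terminal target $x^*_{t'+k\mid t}$, again by the principle of optimality. We can therefore argue exactly as in \cref{lem:supp_conditional}. Apply \eqref{eq:supp_perturb_initial}, treating the terminal target as the last parameter. Use $\norm{x_t}\le R_1+D_{x^*}$ from the tube assumption. Bound the terminal discrepancy by $2R_2$, because both targets lie in $\mathcal{B}(x^*_{t'+k},R_2)$. This gives the two remaining sums of \eqref{eq:supp_perstep_eps}.

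\textbf{Main obstacle: index re-centering.} The perturbation bound anchored at $t$ produces errors $\rho_{t',\tau}$ with lead times measured from $t'$ and kernel weights $q_i(s-t)$ measured from $t$. Since $s-t\le s-t'$ and the horizon is shorter, I expect to need monotonicity of the kernels $q_1,q_2$ to re-index the sum as $\sum_{\tau=0}^{k-1}(\cdot)\rho_{t',\tau}$. If that monotonicity is unavailable, I would instead bound the weights by their suprema and absorb the loss into the constants.

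A second subtlety is that the terminal index is $t'+k$ rather than $t+k$, whereas the hypothesis is stated for $\mathcal{B}(x^*_{t+k},R_2)$. I would interpret the hypothesis as applying to the target actually used by the cached plan, which is how \cref{lem:supp_conditional} reads it, and rely on the tube assumption to keep both terminal states within $R_2$.
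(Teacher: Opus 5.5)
Your proposal follows the paper's proof. The paper inserts the same intermediate plan $\psi_t^{t'+k}(x_t,\xi_{t:t'+k-1\mid t'},\bar y;F)_{v_t}$. It bounds the state-deviation part by $q_3(0)\norm{x_t-\hat y_{t\mid t'}}\le q_3(0)\varepsilon_t/c_\phi$ via the trigger. It handles the mismatch part with \eqref{eq:supp_perturb_initial} exactly as in \cref{lem:supp_conditional}.

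One correction to your re-indexing remark, a point the paper's proof passes over silently: monotonicity runs the wrong way. Anchoring at $t$ puts weight $q_i(\tau-(t-t'))$ on $\rho_{t',\tau}$ and weight $q_i(k-(t-t'))$ on the terminal discrepancy. Replacing these termwise by $q_i(\tau)$ and $q_i(k)$ would require $q_i$ to be nondecreasing. That forces $q_i\equiv 0$ for a summable nonnegative kernel, and for genuinely decaying kernels the inequality is reversed.

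Your fallback is therefore the step that actually works, e.g.\ $q_i(n)\le\sum_{m\ge 0}q_i(m)\le C_i$. This gives the bound with $C_i$ in place of $q_i(\tau)$ and $q_i(k)$. That suffices for the $O(\varepsilon_t+s_t)$ form in \cref{cor:supp_compact_perstep}, the only place the lemma is used.

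Your terminal index $t'+k$ is also the consistent choice. The paper's $x^*_{t+k\mid t}$ does not match a problem ending at $t'+k$.
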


\begin{proof}
Let $t'=p(t)$ denote the most recent replanning time. The action executed at time $t$ is taken from the plan computed at $t'$, whereas the comparator action is the clairvoyant optimum from the current state. Writing the two quantities explicitly and inserting the cached nominal state $\hat y_{t\mid t'}$ gives
\begin{align}
 e_t
 &= \norm{\psi_{t'}^{t'+k}(x_{t'},\xi_{t':t'+k-1\mid t'},\bar y(\xi_{t'+k\mid t'});F)_{v_t} - \psi_t^{T}(x_t,\xi^*_{t:T};F_T)_{v_t}} \nonumber\\
 &\le \underbrace{\norm{\psi_t^{t'+k}(\hat y_{t\mid t'},\xi_{t:t'+k-1\mid t'},\bar y;F)_{v_t}-\psi_t^{t'+k}(x_t,\xi_{t:t'+k-1\mid t'},\bar y;F)_{v_t}}}_{\text{state-deviation term}} \nonumber\\
 &\quad + \underbrace{\norm{\psi_t^{t'+k}(x_t,\xi_{t:t'+k-1\mid t'},\bar y;F)_{v_t}-\psi_t^{t'+k}(x_t,\xi^*_{t:t'+k-1},x^*_{t+k\mid t};F)_{v_t}}}_{\text{prediction-mismatch term}}.
\end{align}
The first term is bounded by the initial-state perturbation bound \eqref{eq:supp_perturb_state}:
\[
\norm{\psi_t^{t'+k}(\hat y_{t\mid t'},\cdot)_{v_t}-\psi_t^{t'+k}(x_t,\cdot)_{v_t}} \le q_3(0)\norm{\hat y_{t\mid t'}-x_t}.
\]
Because threshold-based reuse continues only while the monitored deviation stays below the active tolerance, $\norm{\phi(\hat y_{t\mid t'})-\phi(x_t)}\le \varepsilon_t$. If the theory is read directly in the monitored planning space this is exactly $\norm{\hat y_{t\mid t'}-x_t}\le \varepsilon_t$; otherwise, the local compatibility condition \eqref{eq:supp_phi_bilip} implies $\norm{\hat y_{t\mid t'}-x_t}\le \varepsilon_t/c_\phi$. This fixed factor is absorbed into the leading tolerance coefficient, contributing the term $q_3(0)\varepsilon_t$ up to constants.

For the second term, apply the parameter-perturbation bound \eqref{eq:supp_perturb_initial} exactly as in \cref{lem:supp_conditional}. Using $\norm{x_t}\le R_1+D_{x^*}$ inside the local tube and the terminal-target assumption $\bar y(\xi_{t+k\mid t})\in \mathcal{B}(x_{t+k}^*,R_2)$ gives
\[
\norm{\psi_t^{t'+k}(x_t,\xi_{t:t'+k-1\mid t'},\bar y;F)_{v_t}-\psi_t^{t'+k}(x_t,\xi^*_{t:t'+k-1},x^*_{t+k\mid t};F)_{v_t}}
\]
\[
\le \sum_{\tau=0}^{k-1}\big((R_1+D_{x^*})q_1(\tau)+q_2(\tau)\big)\rho_{t',\tau}
 + 2R_2\big((R_1+D_{x^*})q_1(k)+q_2(k)\big).
\]
Combining the two pieces yields \eqref{eq:supp_perstep_eps}.
\end{proof}

\begin{corollary}[Compact per-step stale-plan form]
\label{cor:supp_compact_perstep}
Under the assumptions of \cref{lem:supp_perstep_eps}, the stale-plan error of both $\MPC_{k,\epsilon}$ and the idealized controller underlying \method satisfies
\begin{equation}
 e_t = O(\varepsilon_t + s_t),
\end{equation}
where the hidden constants depend only on the perturbation kernels and local-tube radii defined above.
\end{corollary}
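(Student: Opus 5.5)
The plan is to start from the explicit per-step bound \eqref{eq:supp_perstep_eps} of \cref{lem:supp_perstep_eps}, which holds under exactly the same hypotheses. Nothing new about the dynamics is needed: the task is to show that each of its three groups of terms is dominated by a constant multiple of $\varepsilon_t$ or of $s_t$. The hidden constants must depend only on $q_1,q_2,q_3$, $R_1$, $R_2$, $D_{x^*}$ and the representation constant $c_\phi$.

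The first term $q_3(0)\varepsilon_t$ is already of the required form. Since $q_3(0)\le\sum_{t\ge0}q_3(t)\le C_3$, it is $O(\varepsilon_t)$. If $\phi\neq I$, the factor $1/c_\phi$ absorbed in the lemma is also absorbed here.

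For the mismatch sum, write $a_\tau\coloneqq (R_1+D_{x^*})q_1(\tau)+q_2(\tau)$, so the term is $\sum_{\tau=0}^{k-1}a_\tau\rho_{p(t),\tau}$. I will apply Cauchy--Schwarz with weights $\omega_\tau$, which I take to be the positive constants in the definition of $s_t$ induced by the perturbation decomposition, e.g.\ $\omega_\tau\propto a_\tau$. This gives
\[
\sum_{\tau}a_\tau\rho_{p(t),\tau}\le\Big(\sum_\tau \frac{a_\tau^2}{\omega_\tau}\Big)^{1/2}\Big(\sum_\tau\omega_\tau\rho_{p(t),\tau}^2\Big)^{1/2}.
\]
With $\omega_\tau=a_\tau$ the first factor is at most $\big(\sum_\tau a_\tau\big)^{1/2}\le\big((R_1+D_{x^*})C_1+C_2\big)^{1/2}$, a constant independent of $k$. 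The second factor is at most $s_t$.

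The terminal constant $2R_2\big((R_1+D_{x^*})q_1(k)+q_2(k)\big)$ is handled through $c_k$. Choosing $c_k$ equal to this quantity, which is a fixed horizon-dependent constant exactly as the definition of $s_t$ allows, gives $c_k\le s_t$, so the term is also $O(s_t)$.

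Adding the three bounds yields $e_t\le C(\varepsilon_t+s_t)$, which is the claim for both $\MPC_{k,\epsilon}$ and the idealized \method controller.

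The main obstacle is the middle step: matching the weights $\omega_\tau$ and the terminal constant $c_k$ to the kernels so that the constant stays uniform in $k$. I expect to handle it by using the summability of $q_1$ and $q_2$ as above. If the paper's $\omega_\tau$ are fixed differently, I would instead bound the sum by $\big(\max_\tau a_\tau^2/\omega_\tau\big)^{1/2}\big(\sum_\tau\omega_\tau\big)^{1/2} s_t$ and absorb this factor into the constant.
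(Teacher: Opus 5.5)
Your proposal is correct and follows the paper's route: start from \cref{lem:supp_perstep_eps}, keep $q_3(0)\varepsilon_t$ as the $O(\varepsilon_t)$ part, and absorb the weighted mismatch sum and the terminal constant into $s_t$. Your Cauchy--Schwarz step with $\omega_\tau=a_\tau$ and $c_k$ set equal to the terminal constant spells out, uniformly in $k$, the conversion from the linear weighted sum in the lemma to the root-sum-of-squares form of $s_t$, which the paper's one-line proof covers only with the phrase ``exactly what is summarized by $s_t$.''

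One small slip, which your main argument does not rely on, is in the fallback for general weights. There the Cauchy--Schwarz constant is $\bigl(\sum_\tau a_\tau^2/\omega_\tau\bigr)^{1/2}\le \max_\tau (a_\tau/\omega_\tau)\,\bigl(\sum_\tau\omega_\tau\bigr)^{1/2}$, not $\bigl(\max_\tau a_\tau^2/\omega_\tau\bigr)^{1/2}\bigl(\sum_\tau\omega_\tau\bigr)^{1/2}$; the latter is too small when the $\omega_\tau$ are small.
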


\begin{proof}
The term $q_3(0)\varepsilon_t$ in \cref{lem:supp_perstep_eps} is linear in the active tolerance, while the weighted mismatch sum together with the terminal-target term is exactly what is summarized by the mismatch statistic $s_t$. Absorbing the kernel-dependent constants into the $O(\cdot)$ notation gives the compact form.
\end{proof}

\begin{theorem}[Regret bound for fixed-threshold reuse]
\label{thm:supp_fixed_eps}
Under the same assumptions, if the threshold $\epsilon$ and prediction errors are sufficiently small to keep the trajectory inside the local perturbation tube, then
\begin{align}
\cost(\MPC_{k,\epsilon})-\cost(\OPT)
&= O\!\left(\sqrt{L^2\cost(\OPT)(E+\epsilon E+\epsilon^2T)}\right.\nonumber\\
&\qquad\left.+L^2(E+\epsilon E+\epsilon^2T)\right),
\end{align}
where $E$ is the cumulative mismatch budget defined in \eqref{eq:supp_E_definition}.
\end{theorem}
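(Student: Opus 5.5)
The plan follows the same route as the fixed-step case (\cref{thm:supp_fixed_step}), with \cref{cor:supp_compact_perstep} taking the place of \cref{lem:supp_conditional}. First, use the small-threshold, small-error hypothesis to argue by induction over replanning epochs that $x_t\in\mathcal{B}(x_t^*,R_1)$ for all $t$. Inside this tube, \cref{lem:supp_perstep_eps} and \cref{cor:supp_compact_perstep} apply with $\varepsilon_t\equiv\epsilon$, which gives $e_t\le c(\epsilon+s_t)$ at every step. The constant $c$ depends only on $q_1,q_2,q_3(0)$, $R_1,R_2,D_{x^*}$ and $c_\phi$. Squaring this per-step bound gives
\[
e_t^2\le c^2(\epsilon^2+2\epsilon s_t+s_t^2),
\qquad
\sum_{t=0}^{T-1}e_t^2 \le c'\big(\epsilon^2T+\epsilon E+E\big),
\]
where the last step uses $s_t^2\le s_t+s_t^2$, $s_t\le s_t+s_t^2$ and the definition \eqref{eq:supp_E_definition} of $E$.

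Second, convert per-step control errors into trajectory deviations. Since $e_t$ is measured against $u^*_{t\mid t}$, the clairvoyant continuation from the \emph{current} state, each step incurs only a one-step perturbation. By \eqref{eq:supp_dynamics_lip}, this yields a next-state discrepancy of at most $L_te_t$ relative to the clairvoyant continuation from $x_t$. The continuation itself is the re-optimized trajectory from a perturbed initial state, so its effect on later stages is controlled by the initial-state bound \eqref{eq:supp_perturb_state} with summable kernel $q_3$. Telescoping the optimal continuations from $x_t$ and from $x_{t+1}$, in the style of \cite{lin2021perturbation}, then gives
\[
\norm{x_t-x_t^*}+\norm{u_t-u_t^*}\le \sum_{\tau<t} q_3(t-\tau-1)\,L_\tau e_\tau + e_t.
\]
Applying Cauchy--Schwarz with $\sum q_3\le C_3$ then gives
\[
A\coloneqq \sum_t\big(\norm{x_t-x_t^*}^2+\norm{u_t-u_t^*}^2\big)=O\!\Big(L^2C_3^2\sum_t e_t^2\Big).
\]
This is the step where only $L$ appears, not $L^{(m)}$. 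There is no multi-step open-loop product, because every step is re-anchored to the current-state optimum.

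Third, feed $A$ into the smoothness regret inequality \cite[Lemma 1]{lin2021perturbation}, which states $\cost(\MPC_{k,\epsilon})-\cost(\OPT)\le\sqrt{\tfrac{\ell}{2}\cost(\OPT)A}+\tfrac{\ell}{2}A$. Combined with the two previous displays, this yields the claimed bound. Replan steps fit the same scheme: there $\varepsilon_t=0$ and the mismatch $s_t$ refers to the fresh plan, so the bound still holds.

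The main obstacle is the second step. The comparator $u^*_{t\mid t}$ is re-anchored at the realized state, while the reused action comes from a plan anchored at $p(t)$. Justifying the telescoping requires that the intermediate continuations stay inside the tube, and that the stale-plan mismatch is fully captured by $s_t$, which is built from $\rho_{p(t),\tau}$. I would first verify the telescoping identity
\[
x_t-x_t^*=\sum_{\tau<t}\Big(x^*_{t\mid\tau+1}-x^*_{t\mid\tau}\Big),
\]
and bound each summand by $q_3(t-\tau-1)\norm{x_{\tau+1}-x^*_{\tau+1\mid\tau}}\le q_3(t-\tau-1)L_\tau e_\tau$. Only after that would I carry out the constant bookkeeping.
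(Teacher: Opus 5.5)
Your proposal is correct and follows the paper's proof. The paper squares the compact per-step bound $e_t=O(\epsilon+s_t)$ of \cref{cor:supp_compact_perstep}, sums to $O(\epsilon^2T+\epsilon E+E)$ using the definition of $E$, and passes the result through the smoothness regret inequality \cite[Lemma 1]{lin2021perturbation}, exactly as you do. Your telescoping step through \eqref{eq:supp_perturb_state} gives $A=O(L^2C_3^2\sum_t e_t^2)$ with the single-step $L$, because the comparator is re-anchored at the current state; this makes explicit the error-to-deviation conversion that the paper leaves implicit when it simply sets $A=O(E+\epsilon E+\epsilon^2T)$ ``as above''. One small caveat: the bare $e_t$ term in the action deviation is absorbed into the $L^2$ factor only under the implicit convention $L\gtrsim 1$, which the paper also relies on.
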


\begin{proof}
Starting from \cref{cor:supp_compact_perstep} with the constant tolerance $\varepsilon_t=\epsilon$, squaring the per-step error gives
\[
e_t^2 = O(\epsilon^2) + O(\epsilon s_t) + O(s_t^2).
\]
Summing over time gives
\[
\sum_{t=0}^{T-1} e_t^2 = O\!\left(\epsilon^2T + \epsilon\sum_{t=0}^{T-1}s_t + \sum_{t=0}^{T-1}s_t^2\right).
\]
By the definition \eqref{eq:supp_E_definition}, both $\sum_t s_t$ and $\sum_t s_t^2$ are bounded by $E$, so
\[
\sum_{t=0}^{T-1} e_t^2 = O(E + \epsilon E + \epsilon^2T).
\]
Applying the same smoothness-based dynamic-regret inequality as above with $A=O(E + \epsilon E + \epsilon^2T)$ gives the theorem.
\end{proof}

\paragraph{Monitor conditions for adaptive thresholding.}
The practical controller does not observe $L_t$ or $s_t$ directly. To make the analytical motivation explicit, we therefore impose idealized monitor-dominance conditions \cite[Section 3.2]{lin2022bounded}: the online monitors dominate the unobserved quantities up to constants $\lambda,\mu>0$, namely $\widehat L_t\geq \lambda L_t$ and $d_t\geq \mu s_t$. These constants quantify monitor quality and appear only inside the hidden constants of the final adaptive regret bound.

\begin{proposition}[Exponential thresholding bounds the stale-plan penalties]
\label{prop:supp_adaptive_penalty}
Assume there exist constants $\lambda,\mu>0$ such that $\widehat{L}_t\geq \lambda L_t$ and $d_t\geq \mu s_t$. Then the instantaneous threshold
\[
\tilde\epsilon_t = \epsilon_0 e^{-\alpha_\delta d_t}e^{-\alpha_L \widehat L_t}
\]
implies
\begin{align}
L_t^2\tilde\epsilon_t^2 &\leq \frac{\epsilon_0^2}{e^2\lambda^2\alpha_L^2},\\
L_t^2\tilde\epsilon_t s_t &\leq \frac{4\epsilon_0}{e^3\lambda^2\mu\alpha_L^2\alpha_\delta}.
\end{align}
\end{proposition}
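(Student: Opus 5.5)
The plan is to reduce both inequalities to the two elementary one-variable bounds quoted in the main text,
\[
\sup_{x\ge 0} x e^{-ax} = \frac{1}{ea},\qquad \sup_{x\ge 0} x^2 e^{-ax} = \frac{4}{e^2a^2}\qquad (a>0).
\]
These follow by maximizing at $x=1/a$ and $x=2/a$, respectively. To do this, I will first trade each unobserved quantity $L_t, s_t$ for its observable monitor using the monitor-dominance conditions. All quantities involved ($L_t$, $s_t$, $d_t$, $\widehat L_t$, $\epsilon_0$, $\alpha_\delta$, $\alpha_L$) are nonnegative, with $\lambda,\mu,\alpha_\delta,\alpha_L>0$. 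So $x\mapsto x$ and $x\mapsto x^2$ are monotone on the relevant range, and the substitutions $L_t\le \widehat L_t/\lambda$ and $s_t\le d_t/\mu$ preserve the direction of the inequalities.

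For the first bound, I will write $L_t^2\tilde\epsilon_t^2 = \epsilon_0^2 L_t^2 e^{-2\alpha_\delta d_t}e^{-2\alpha_L\widehat L_t}$. I then discard the factor $e^{-2\alpha_\delta d_t}\le 1$ and use $L_t^2\le \widehat L_t^2/\lambda^2$, which gives $\frac{\epsilon_0^2}{\lambda^2}\,\widehat L_t^2 e^{-2\alpha_L\widehat L_t}$. Applying the quadratic bound with $a=2\alpha_L$ yields $\frac{4}{e^2(2\alpha_L)^2}=\frac{1}{e^2\alpha_L^2}$, and hence the claimed $\epsilon_0^2/(e^2\lambda^2\alpha_L^2)$.

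For the mixed bound, I will write $L_t^2\tilde\epsilon_t s_t = \epsilon_0\,\big(L_t^2 e^{-\alpha_L\widehat L_t}\big)\big(s_t e^{-\alpha_\delta d_t}\big)$. The point is that the product of exponentials separates, so each factor can be controlled independently. Substituting the monitor bounds gives at most $\frac{\epsilon_0}{\lambda^2\mu}\,\big(\widehat L_t^2 e^{-\alpha_L\widehat L_t}\big)\big(d_t e^{-\alpha_\delta d_t}\big)$. The quadratic bound with $a=\alpha_L$ controls the first factor by $4/(e^2\alpha_L^2)$, and the linear bound with $a=\alpha_\delta$ controls the second by $1/(e\alpha_\delta)$. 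Multiplying gives $4\epsilon_0/(e^3\lambda^2\mu\alpha_L^2\alpha_\delta)$.

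There is no real obstacle here. The only point requiring care is the direction of the substitution. The threshold decays in the \emph{monitor} variables, while the penalty grows in the \emph{true} variables. Therefore the dominance must run as $\widehat L_t\ge\lambda L_t$ and $d_t\ge \mu s_t$, so that the true quantities are upper-bounded by the monitors in the prefactor, while the exponentials remain expressed in the monitors. With that orientation, the bounds hold pointwise in $t$, uniformly over the trajectory.
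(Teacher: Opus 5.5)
Your proof is correct and follows the paper's argument. Both rest on the elementary bounds $xe^{-ax}\le 1/(ea)$ and $x^2e^{-ax}\le 4/(e^2a^2)$, drop the exponential factor that is not needed in the first inequality, and split the mixed term into a sensitivity factor and a mismatch factor. The differences are cosmetic. The paper bounds $L_t\tilde\epsilon_t$ first and then squares, which is equivalent to your quadratic bound with $a=2\alpha_L$. The paper also writes the elementary bounds as $e^{-x}\le 1/(ex)$, which divides by $\widehat L_t$ and $d_t$. Your order (replace the true quantities by the monitors in the prefactor, then take a supremum over $x\ge 0$) avoids needing the monitors to be strictly positive.
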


\begin{proof}
Using $e^{-x}\leq 1/(ex)$ for $x>0$,
\begin{equation}
L_t\tilde\epsilon_t \leq \epsilon_0L_t e^{-\alpha_L\widehat L_t}
\leq \epsilon_0\frac{L_t}{e\alpha_L\widehat L_t}
\leq \frac{\epsilon_0}{e\lambda\alpha_L},
\end{equation}
which gives the first claim after squaring. For the mixed term, use both $e^{-x}\leq 1/(ex)$ and $e^{-x}\leq 4/(e^2x^2)$:
\begin{align}
L_t^2\tilde\epsilon_t s_t
&\leq \epsilon_0\big(L_t^2 e^{-\alpha_L\widehat L_t}\big)\big(e^{-\alpha_\delta d_t}s_t\big) \\
&\leq \epsilon_0 \left(\frac{4L_t^2}{e^2\alpha_L^2\widehat L_t^2}\right)\left(\frac{s_t}{e\alpha_\delta d_t}\right)
\leq \frac{4\epsilon_0}{e^3\lambda^2\mu\alpha_L^2\alpha_\delta}.
\end{align}
This proves the proposition.
\end{proof}

\begin{theorem}[Adaptive regret with time-varying thresholds]
\label{thm:supp_adaptive_direct}
Under the assumptions of \cref{thm:supp_fixed_eps}, if the idealized controller underlying \method executes with the instantaneous threshold $\tilde\epsilon_t$ while the trajectory remains inside the local perturbation tube, then
\begin{align}
\cost(\method)-\cost(\OPT)
&= O\!\Bigg(\sqrt{\cost(\OPT)\Big(L^2E+\sum_{t=0}^{T-1}L_t^2\tilde\epsilon_t s_t 
+ \sum_{t=0}^{T-1}L_t^2\tilde\epsilon_t^2\Big)} \nonumber\\
&\qquad\ +L^2E+\sum_{t=0}^{T-1}L_t^2\tilde\epsilon_t s_t 
+ \sum_{t=0}^{T-1}L_t^2\tilde\epsilon_t^2\Bigg),
\end{align}
where $E$ is the cumulative mismatch budget defined in \eqref{eq:supp_E_definition}.
\end{theorem}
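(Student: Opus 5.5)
The argument follows the proof of \cref{thm:supp_fixed_eps}, with the constant tolerance $\epsilon$ replaced by the time-varying tolerance $\varepsilon_t=\tilde\epsilon_t$. The key difference is that $\tilde\epsilon_t$ can no longer be pulled out of the sum. The sensitivity factor therefore has to be kept per step, as $L_t^2$, rather than bounded globally by $L^2$.

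\textbf{Step 1: per-step error.} Invoke \cref{lem:supp_perstep_eps} and \cref{cor:supp_compact_perstep} with $\varepsilon_t=\tilde\epsilon_t$. This is legitimate because the reuse rule of the idealized \method keeps the monitored deviation at or below $\tilde\epsilon_t$ whenever a cached plan is reused. At replanning steps the deviation is zero, so the bound holds trivially. This gives $e_t=O(\tilde\epsilon_t+s_t)$. Squaring yields
\[
e_t^2=O\big(\tilde\epsilon_t^2+\tilde\epsilon_t s_t+s_t^2\big).
\]

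\textbf{Step 2: propagation through the dynamics.} Following the proofs of \cref{lem:supp_state_dev} and \cref{lem:supp_regret_conditional}, bound the accumulated deviation
\[
A=\sum_t\norm{x_t-x_t^*}^2+\sum_t\norm{u_t-u_t^*}^2
\]
by a sensitivity-weighted sum of squared per-step errors. Each one-step stale error enters the next state through the Lipschitz constant $L_t$ from \eqref{eq:supp_dynamics_lip}. It is then carried forward through the initial-state perturbation kernel $q_3$, whose summability gives the constant $C_3$. The result should be
\[
A=O\Big(\sum_t L_t^2 e_t^2\Big)=O\Big(\sum_t L_t^2 s_t^2+\sum_t L_t^2\tilde\epsilon_t s_t+\sum_t L_t^2\tilde\epsilon_t^2\Big).
\]
For the first sum, use $L_t\le L$ and the definition \eqref{eq:supp_E_definition} to get $\sum_t L_t^2s_t^2\le L^2E$. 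The two threshold-dependent sums are left exactly as they appear in the statement.

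\textbf{Step 3: conclusion.} Apply the smoothness-based inequality $\cost-\cost(\OPT)\le\sqrt{\tfrac{\ell}{2}\cost(\OPT)A}+\tfrac{\ell}{2}A$ from \cite[Lemma 1]{lin2021perturbation}, as in \cref{thm:supp_fixed_eps}, and absorb the constants into $O(\cdot)$. The hypothesis that the trajectory stays in the local tube guarantees that the perturbation bounds are valid at every step.

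\textbf{Main obstacle.} The delicate part is Step 2: justifying a per-step weight $L_t^2$ rather than a cumulative product over the reuse segment. Unlike the fixed-step case, threshold reuse re-anchors the error at every step. The comparator $u^*_{t\mid t}$ is the clairvoyant plan from the current state $x_t$, and the state deviation $\norm{\hat y_t-x_t}$ is already capped by the trigger. So the propagation should only need a single application of $L_t$ followed by the decaying kernel. I would first prove this one-step recursion directly. If it fails, the fallback is to accept an additional constant factor coming from $C_3$.
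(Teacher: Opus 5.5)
Your proposal follows the paper's proof essentially step for step: you invoke the compact per-step bound with $\varepsilon_t=\tilde\epsilon_t$, square it into pure-mismatch, mixed, and pure-threshold pieces, propagate through the Lipschitz dynamics to attach per-step $L_t^2$ weights, and finish with the smoothness-based regret inequality. Your explicit propagation (one application of $L_t$ followed by the summable $q_3$ kernel, giving $\sum_t L_t^2 s_t^2\le L^2E$) is in fact more consistent with the stated $L^2E$ term than the paper's intermediate step, which attaches the $L_t^2$ weights to $\sum_t e_t^2$ itself and writes $E$ rather than $L^2E$.

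One caveat applies to both your argument and the paper's. The action component $\sum_t\norm{u_t-u_t^*}^2$ of $A$ contains an unweighted $\sum_t e_t^2$, so $A=O(\sum_t L_t^2e_t^2)$ holds only if $L_t$ is bounded below, for example after enlarging the Lipschitz constants so that $L_t\ge 1$. The stated bound tacitly assumes this normalization.
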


\begin{proof}
Apply \cref{cor:supp_compact_perstep} with the time-varying tolerance $\varepsilon_t=\tilde\epsilon_t$. Squaring the resulting per-step bound gives three contributions: a pure mismatch term, a pure threshold term, and a mixed term,
\[
e_t^2 = O(s_t^2) + O(\tilde\epsilon_t^2) + O(\tilde\epsilon_t s_t).
\]
When the state-deviation inequality is propagated through the Lipschitz dynamics, the last two contributions inherit the same local-sensitivity factors that appear in the main-text statement. Consequently,
\[
\sum_{t=0}^{T-1} e_t^2 = O\!\left(E + \sum_{t=0}^{T-1}L_t^2\tilde\epsilon_t s_t + \sum_{t=0}^{T-1}L_t^2\tilde\epsilon_t^2\right),
\]
because the pure mismatch contribution $\sum_t s_t^2$ is dominated by the budget $E$ from \eqref{eq:supp_E_definition}.
Substituting this expression into the same smoothness-based dynamic-regret inequality used for \cref{thm:supp_fixed_eps} with $A=O\!\left(E + \sum_t L_t^2\tilde\epsilon_t s_t + \sum_t L_t^2\tilde\epsilon_t^2\right)$ gives the stated adaptive regret bound.
\end{proof}

\begin{corollary}[Monitor-controlled adaptive regret]
\label{cor:supp_adaptive_regret}
Under the assumptions of \cref{thm:supp_adaptive_direct,prop:supp_adaptive_penalty}, the regret of \method satisfies
\begin{align}
\cost(\method)-\cost(\OPT)
&= O\!\left(\sqrt{\cost(\OPT)\left(L^2E+\frac{\epsilon_0}{\alpha_L^2}\left(\epsilon_0+\frac{1}{\alpha_\delta}\right)T\right)}\right.\nonumber\\
&\qquad\left.+L^2E+\frac{\epsilon_0}{\alpha_L^2}\left(\epsilon_0+\frac{1}{\alpha_\delta}\right)T\right).
\end{align}
\end{corollary}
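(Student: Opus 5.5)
The plan is to combine \cref{thm:supp_adaptive_direct} with \cref{prop:supp_adaptive_penalty}. The bound in \cref{thm:supp_adaptive_direct} already has the structure we want: a leading mismatch term $L^2E$, plus two threshold-dependent sums,
\[
\Sigma_1\coloneqq\sum_{t=0}^{T-1}L_t^2\tilde\epsilon_t s_t,\qquad
\Sigma_2\coloneqq\sum_{t=0}^{T-1}L_t^2\tilde\epsilon_t^2 .
\]
These sums appear inside both the square-root term and the linear term. So the argument reduces to bounding $\Sigma_1$ and $\Sigma_2$ uniformly in the (unknown) trajectory by quantities depending only on $(\epsilon_0,\alpha_\delta,\alpha_L)$ and $T$.

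First, I apply \cref{prop:supp_adaptive_penalty} termwise. Under the monitor-dominance conditions $\widehat L_t\ge\lambda L_t$ and $d_t\ge\mu s_t$, each summand of $\Sigma_2$ is at most $\epsilon_0^2/(e^2\lambda^2\alpha_L^2)$. Each summand of $\Sigma_1$ is at most $4\epsilon_0/(e^3\lambda^2\mu\alpha_L^2\alpha_\delta)$. Summing over the $T$ steps gives
\[
\Sigma_1+\Sigma_2\le \frac{T\epsilon_0}{\alpha_L^2}\Big(\frac{\epsilon_0}{e^2\lambda^2}+\frac{4}{e^3\lambda^2\mu\,\alpha_\delta}\Big)
= O\!\Big(\frac{\epsilon_0}{\alpha_L^2}\Big(\epsilon_0+\frac{1}{\alpha_\delta}\Big)T\Big).
\]
Here the monitor-quality constants $\lambda,\mu$ and the numerical factors are absorbed into the $O(\cdot)$, as stipulated in the paragraph on monitor conditions.

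Next, I substitute this into \cref{thm:supp_adaptive_direct}. Both the map $A\mapsto\sqrt{\cost(\OPT)A}$ and the map $A\mapsto A$ are monotone nondecreasing in $A\ge 0$. Hence replacing $\Sigma_1+\Sigma_2$ by its upper bound inside both places preserves the inequality. This yields exactly the stated expression of \cref{cor:supp_adaptive_regret}, and hence \cref{thm:adaptive_regret_summary}.

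The main obstacle is the edge cases of \cref{prop:supp_adaptive_penalty}, namely $\widehat L_t=0$ or $d_t=0$. There the inequalities $e^{-x}\le 1/(ex)$ are vacuous, but the dominance assumptions force $L_t=0$ or $s_t=0$ respectively. In those cases the corresponding summand vanishes, so the termwise bounds still hold.

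A second subtlety is that \cref{thm:supp_adaptive_direct} is stated for the instantaneous threshold $\tilde\epsilon_t$, whereas \eqref{eq:adaptive_threshold} uses smoothed averages. I would read the statement for the idealized controller, exactly as the main text does. If needed, I would remark that the same argument applies whenever the monitor-dominance conditions are imposed on the smoothed statistics $\bar d_t$ and $\bar L_t$ instead.

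Finally, the standing assumption that the trajectory stays in the local tube must be inherited. Since $\tilde\epsilon_t\le\epsilon_0$, it suffices to take $\epsilon_0$ small enough, in the same way as in \cref{thm:supp_fixed_eps}.
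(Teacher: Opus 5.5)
Your proposal is correct and is essentially the paper's proof. Like the paper, you apply \cref{prop:supp_adaptive_penalty} termwise to the two threshold-dependent sums in \cref{thm:supp_adaptive_direct}, sum over the $T$ steps to get $O(\epsilon_0^2T/\alpha_L^2)$ and $O(\epsilon_0T/(\alpha_L^2\alpha_\delta))$, absorb $\lambda,\mu$ into the constants, and substitute by monotonicity. Your extra remarks on the smoothed threshold and on the tube condition via $\tilde\epsilon_t\le\epsilon_0$ are fine. One small nit on the edge case: since the paper assumes $L_t>0$, the condition $\widehat L_t\ge\lambda L_t$ rules out $\widehat L_t=0$ rather than forcing $L_t=0$, so that case is simply vacuous.
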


\begin{proof}
Invoke \cref{prop:supp_adaptive_penalty} inside \cref{thm:supp_adaptive_direct}. This replaces the two threshold-dependent sums by terms of order $O(\epsilon_0T/\alpha_L^2\alpha_\delta)$ and $O(\epsilon_0^2T/\alpha_L^2)$, respectively. Collecting these contributions yields the stated closed form. The practical controller uses the smoothed threshold as a stabilized implementation of the same monotone update.
\end{proof}

\clearpage
\section{Experimental Details}
\label{sec:supp:protocol}

\subsection{Real-World Experiments Setup}
\paragraph{Real-world Franka manipulation.}
The physical experiments use a Franka Emika Panda arm equipped with a cubic pusher end-effector and a Vicon motion-capture system (\cref{fig:supp_real_setup}). Door opening is represented with a 6-dimensional state containing the hinge, handle, and end-effector keypoints; the corresponding learned world model is a 3-layer MLP. T-block manipulation is represented with an 8-dimensional state containing three object keypoints together with the end-effector; the corresponding world model is a 6-layer MLP. We report binary task success and NFE for two task families: door opening to $90^\circ$ and $180^\circ$, and T-block translation, rotation, and combined translation-plus-rotation. Each physical task instance is evaluated for 10 trials, yielding 20 articulation trials and 30 rearrangement trials, for 50 physical trials in total.

\begin{figure}[H]
    \centering
    \includegraphics[width=0.78\linewidth]{figures/highres/real_exp_setup.jpg}
    \caption{Real-world Franka experimental platform used for the articulation and T-block manipulation tasks. The robot is equipped with a cubic pusher end-effector and tracked with a Vicon motion-capture system.}
    \label{fig:supp_real_setup}
\end{figure}

\subsection{State-based World Modeling for Real-World Tasks}
\paragraph{Real-world state representation and training.}
The state-based Franka models are learned as one-step predictors $x_{t+1}=f(x_t,u_t)$ with $u_t\in\mathbb{R}^2$ the planar pusher command. For door opening, the state comprises the 2D hinge, handle, and end-effector keypoints. For T-block manipulation, the state contains three object keypoints together with the end-effector (\cref{fig:supp_franka_state_repr}). Training data are collected from a mixture of successful demonstrations and random exploration trajectories. The state-based models are trained with Adam, learning rate $5\times 10^{-6}$, batch size 16, for 300 epochs, and multi-step rollouts are obtained autoregressively at deployment time. The optimizer settings are therefore fixed across the two physical backbones; only the learned predictor differs between the door and T-block state representations.

\begin{figure}[t]
    \centering
    \includegraphics[width=0.88\linewidth]{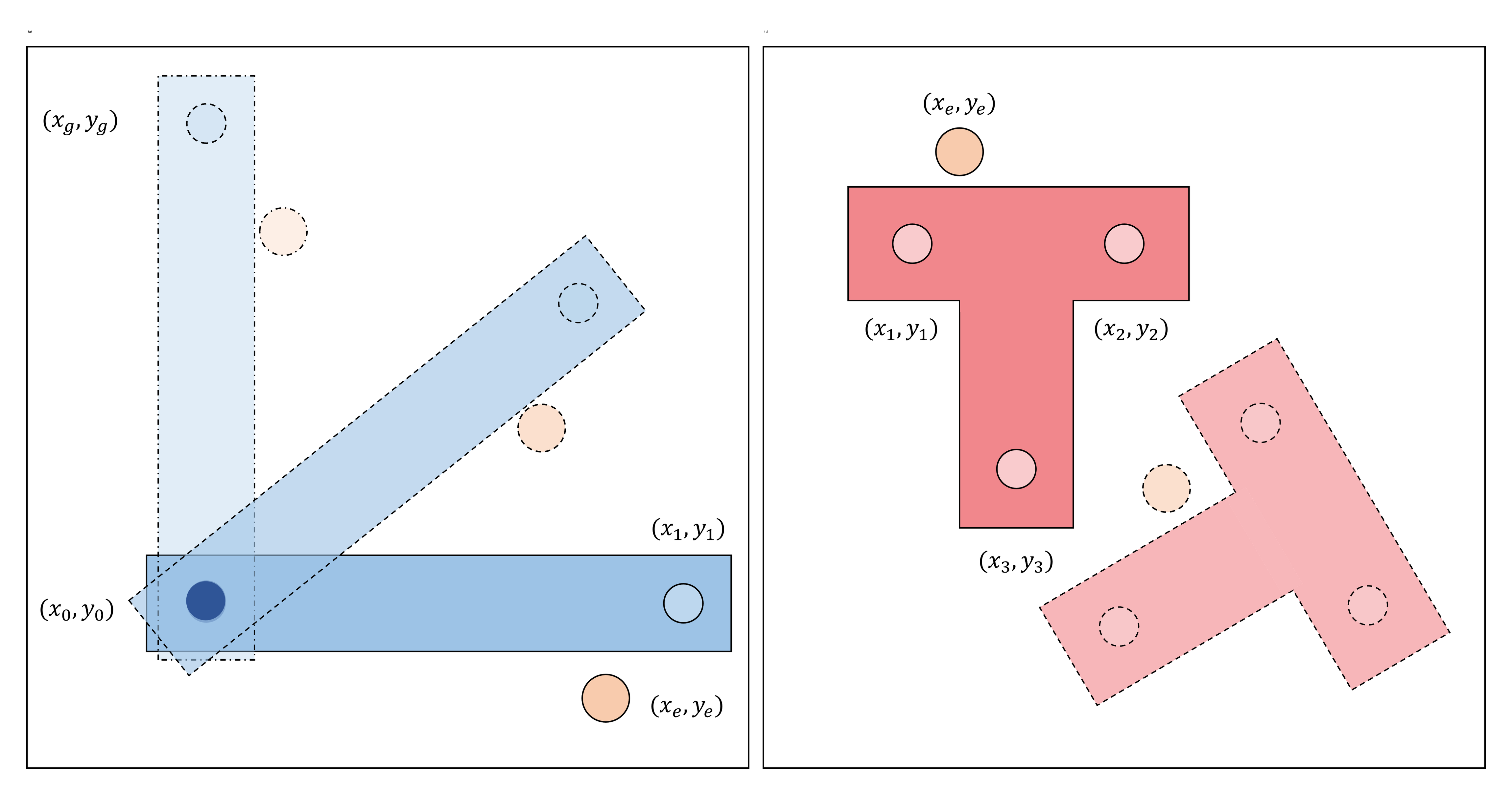}
    \caption{State representations used for the real-world Franka experiments. Panel (a) shows the door-opening representation with hinge, handle, and end-effector keypoints. Panel (b) shows the T-block representation with three object keypoints and the end-effector.}
    \label{fig:supp_franka_state_repr}
\end{figure}

\subsection{Disturbed Simulators and Visual Corruptions}
\paragraph{Disturbed simulator configuration.}
For the disturbance study, Gaussian noise with mean zero is added to selected state components. The standard deviations used for the three disturbance levels are listed in \cref{tab:noisy_simulator_cfg}.

\begin{table}[t]
    \centering
    \caption{Standard deviations of Gaussian noise applied to state components in the disturbed-simulator experiments.}
    \label{tab:noisy_simulator_cfg}
    \begin{tabular}{lccc}
    \toprule
    Component & Level 1 & Level 2 & Level 3 \\
    \midrule
    Robot position & $0.001$ & $0.005$ & $0.010$ \\
    Robot velocity & $0.001$ & $0.005$ & $0.010$ \\
    Object position & $0.000$ & $0.001$ & $0.005$ \\
    Object velocity & $0.000$ & $0.001$ & $0.005$ \\
    End-effector position & $0.001$ & $0.005$ & $0.010$ \\
    \bottomrule
    \end{tabular}
\end{table}
\paragraph{Visual corruption parameters.}
For the image-space corruption study, we apply additive Gaussian noise or Gaussian blur to the predicted images before feature extraction. The parameter settings are summarized in \cref{tab:visual_noise_params,tab:visual_blur_params}, and representative examples are shown in \cref{fig:visual_corruption_examples}.

\begin{table}[t]
\centering
\begin{minipage}{.48\linewidth}
    \centering
    \caption{Gaussian-noise parameters for predicted images.}
    \label{tab:visual_noise_params}
    \begin{tabular}{cc}
    \toprule
    Noise level & std \\
    \midrule
    Level 1 & 0.1 \\
    Level 2 & 0.2 \\
    Level 3 & 0.5 \\
    \bottomrule
    \end{tabular}
\end{minipage}\hfill
\begin{minipage}{.48\linewidth}
    \centering
    \caption{Gaussian-blur parameters for predicted images.}
    \label{tab:visual_blur_params}
    \begin{tabular}{cc}
    \toprule
    Blur level & sigma \\
    \midrule
    Level 1 & 0.1 \\
    Level 2 & 0.5 \\
    Level 3 & 1.0 \\
    \bottomrule
    \end{tabular}
\end{minipage}
\end{table}

\begin{figure}[H]
    \centering
    \includegraphics[width=0.94\linewidth]{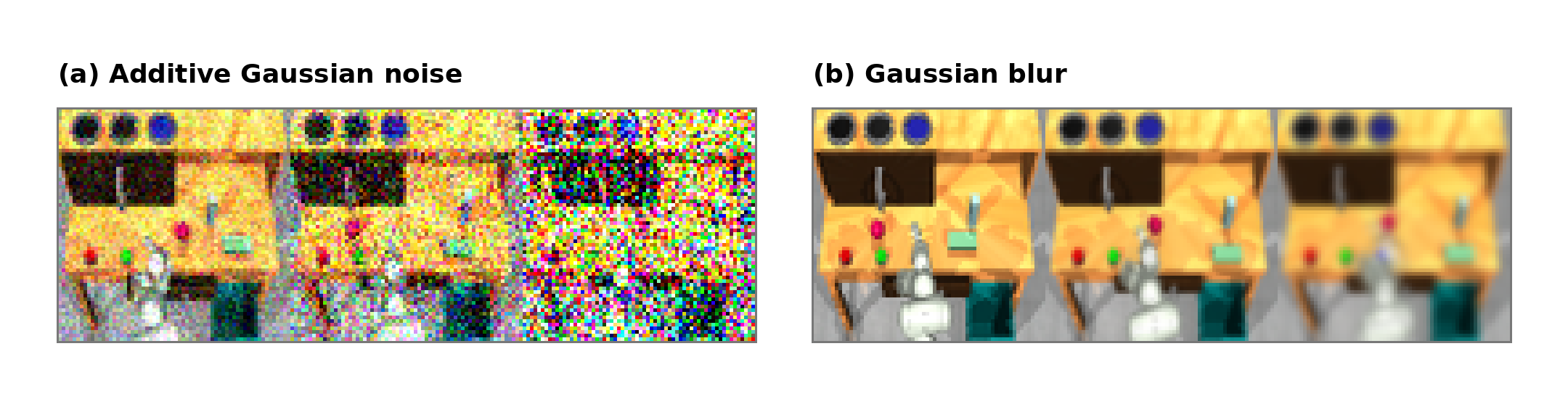}
    \caption{Representative visual corruptions used in the image-space robustness study. The left panel shows additive Gaussian noise applied at increasing levels, and the right panel shows increasing Gaussian blur.}
    \label{fig:visual_corruption_examples}
\end{figure}

\FloatBarrier
\section{More Experimental Results}
\label{sec:supp:experiments}
This section groups the additional empirical material after the setup section. It contains the raw fixed-baseline sweeps used to define the operating range, the diagnostics that motivate the adaptive rule, and the exact per-task real-world outcomes underlying the aggregate summaries in the main text.

\subsection{Experimental Results for $\MPC_k^m$ and $\MPC_{k,\epsilon}$}
We first show the raw fixed-baseline sweeps for the VP2 benchmark. Keeping these plots together makes the operating-range story easier to follow: different tasks and backbones prefer different fixed reuse settings, which is exactly the variability that motivates an adaptive replanning rule.

\begin{figure}[t]
    \centering
    \begin{minipage}[t]{0.49\linewidth}
        \centering
        \includegraphics[width=\textwidth]{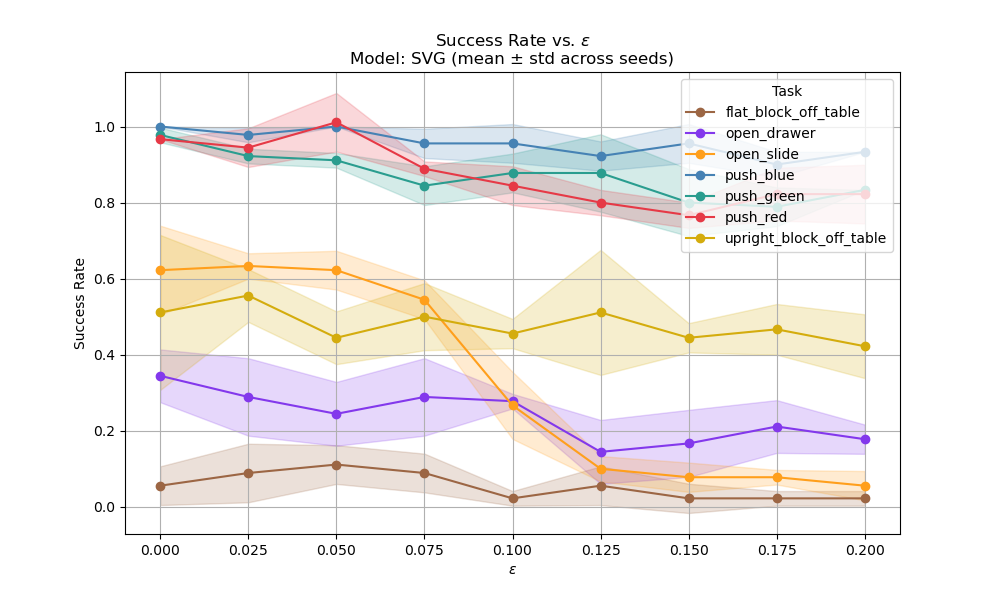}
    \end{minipage}\hfill
    \begin{minipage}[t]{0.49\linewidth}
        \centering
        \includegraphics[width=\textwidth]{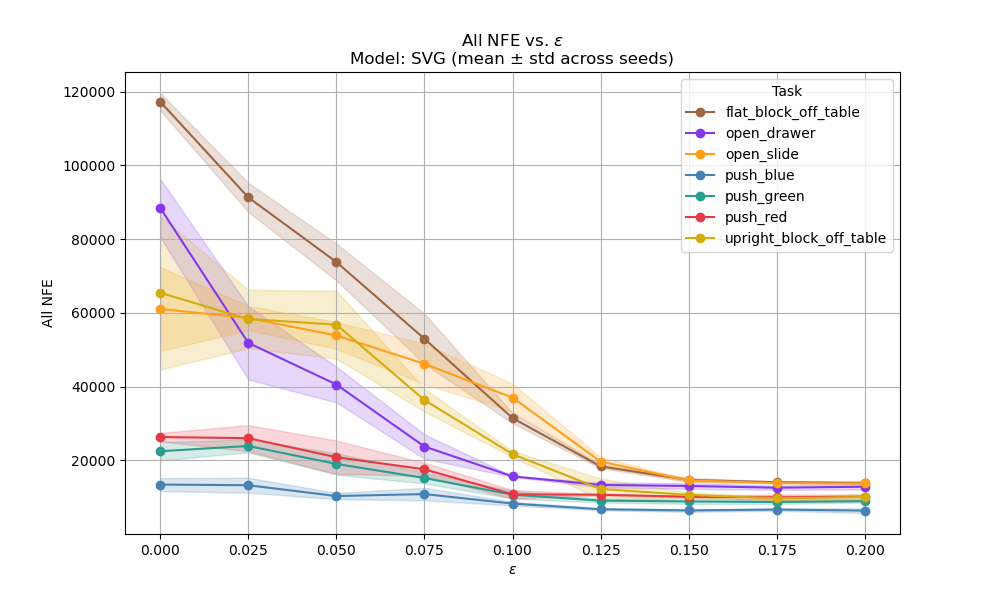}
    \end{minipage}
    \caption{VP2 fixed-threshold sweeps for the SVG world model. Left: task-wise success versus the deviation threshold $\epsilon$. Right: the corresponding NFE versus $\epsilon$.}
    \label{fig:supp_threshold_svg}
\end{figure}

\begin{figure}[t]
    \centering
    \begin{minipage}[t]{0.49\linewidth}
        \centering
        \includegraphics[width=\textwidth]{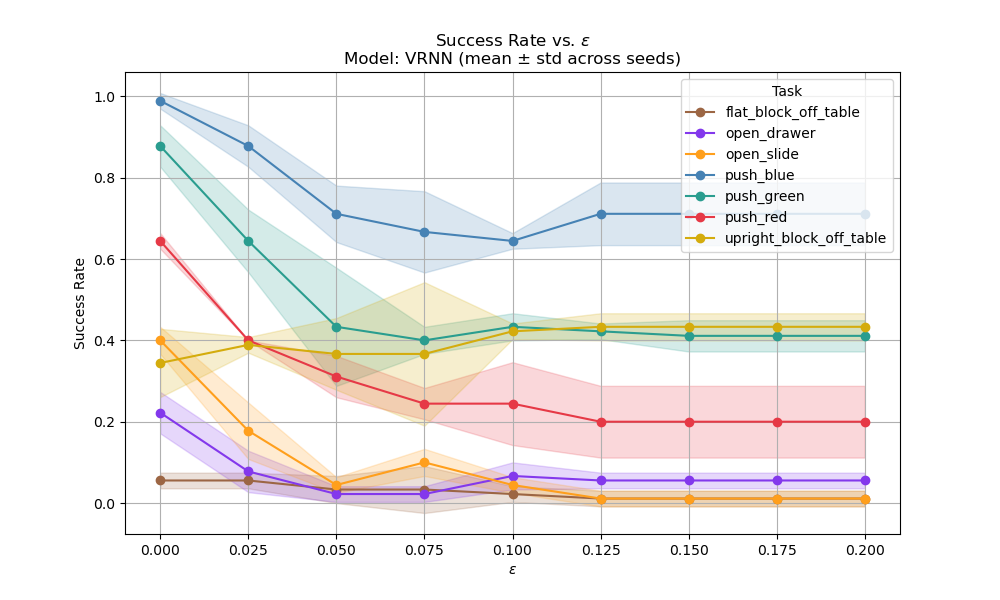}
    \end{minipage}\hfill
    \begin{minipage}[t]{0.49\linewidth}
        \centering
        \includegraphics[width=\textwidth]{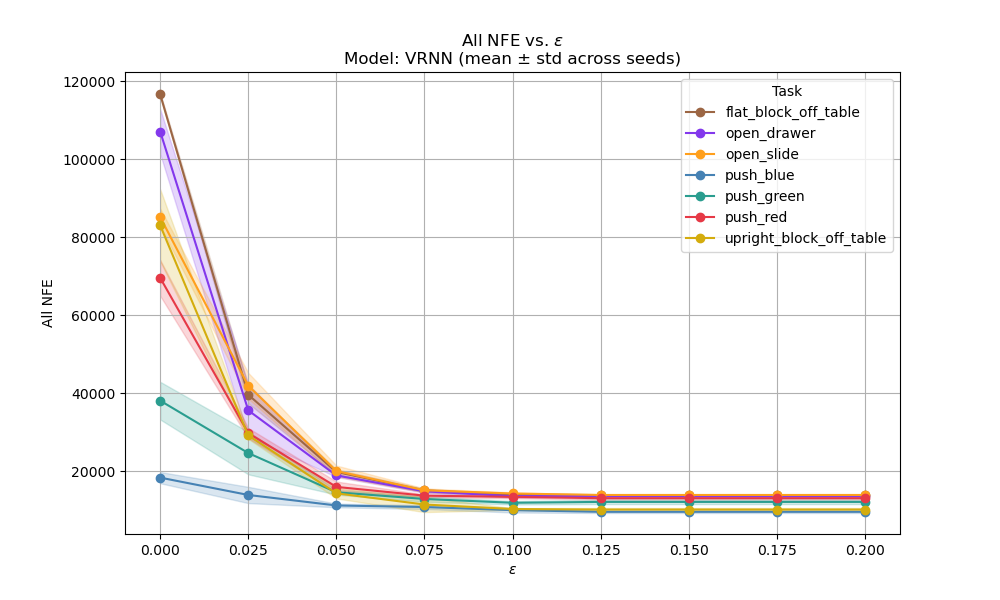}
    \end{minipage}
    \caption{VP2 fixed-threshold sweeps for the Struct-VRNN world model. Left: task-wise success versus $\epsilon$. Right: the corresponding NFE versus $\epsilon$.}
    \label{fig:supp_threshold_vrnn}
\end{figure}

\begin{figure}[t]
    \centering
    \begin{minipage}[t]{0.49\linewidth}
        \centering
        \includegraphics[width=\textwidth]{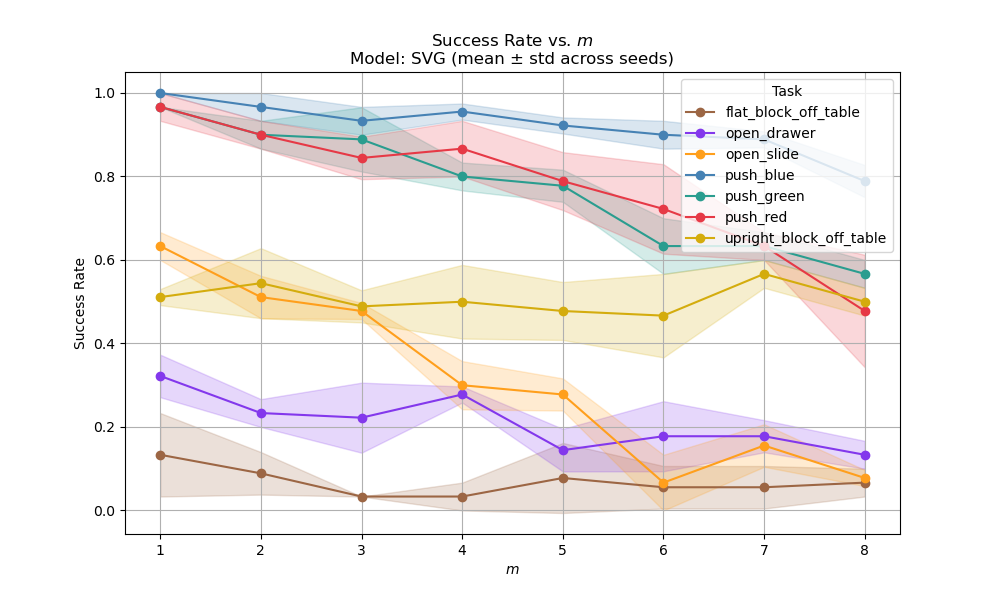}
    \end{minipage}\hfill
    \begin{minipage}[t]{0.49\linewidth}
        \centering
        \includegraphics[width=\textwidth]{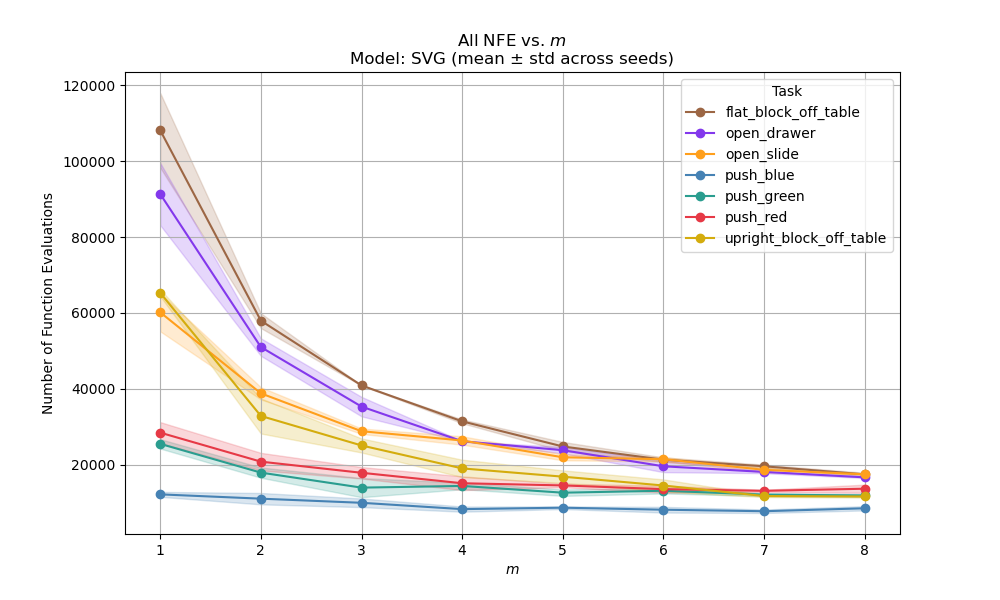}
    \end{minipage}
    \caption{VP2 fixed-step sweeps for the SVG world model. Left: task-wise success versus the reuse interval $m$. Right: the corresponding NFE versus $m$.}
    \label{fig:supp_step_svg}
\end{figure}

\begin{figure}[t]
    \centering
    \begin{minipage}[t]{0.49\linewidth}
        \centering
        \includegraphics[width=\textwidth]{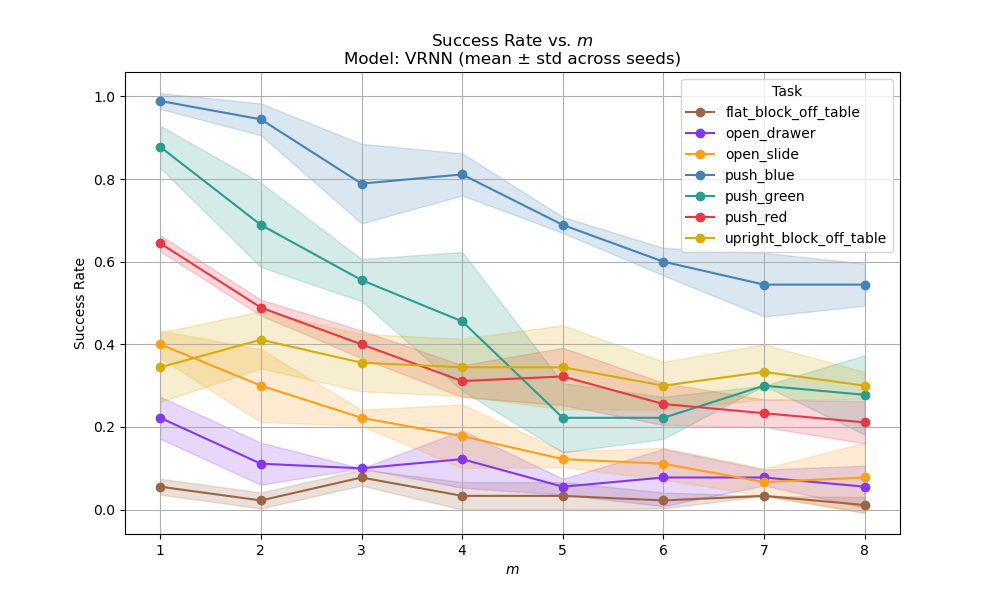}
    \end{minipage}\hfill
    \begin{minipage}[t]{0.49\linewidth}
        \centering
        \includegraphics[width=\textwidth]{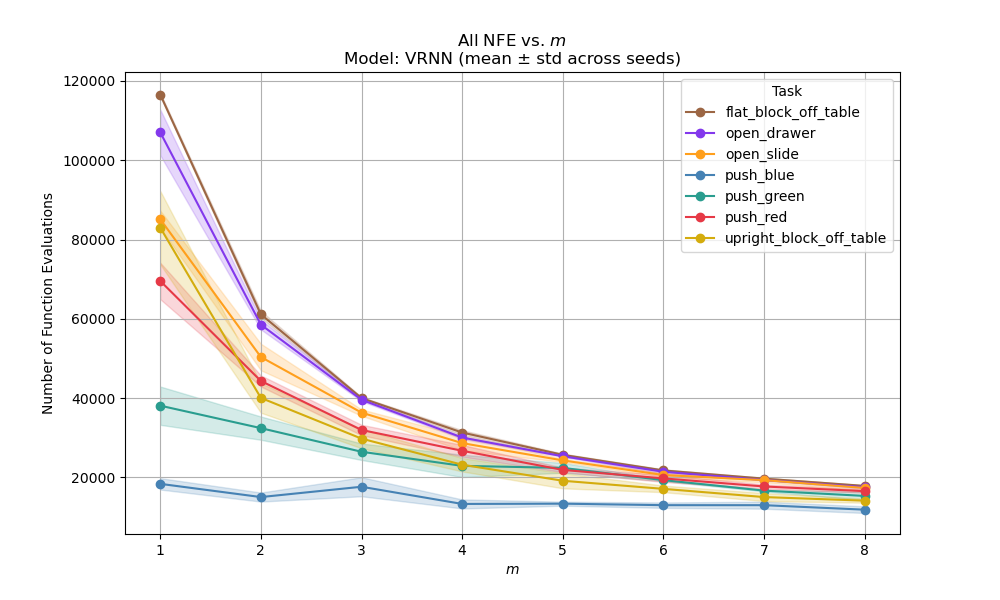}
    \end{minipage}
    \caption{VP2 fixed-step sweeps for the Struct-VRNN world model. Left: task-wise success versus $m$. Right: the corresponding NFE versus $m$.}
    \label{fig:supp_step_vrnn}
\end{figure}

Taken together, \cref{fig:supp_threshold_svg,fig:supp_threshold_vrnn,fig:supp_step_svg,fig:supp_step_vrnn} demonstrate that no single fixed $m$ or fixed $\epsilon$ is uniformly preferred across tasks and backbones. Some tasks tolerate aggressive plan reuse with little performance loss, while others degrade sharply once the reuse interval or deviation tolerance becomes too large.

\FloatBarrier
\subsection{Trajectory Visualization of Real-World Experiments}
We next visualize representative physical trajectories before reporting the exact per-task real-world counts used to form the aggregate summary in the main text.
\begin{figure}[H]
    \centering
    \includegraphics[width=0.98\linewidth]{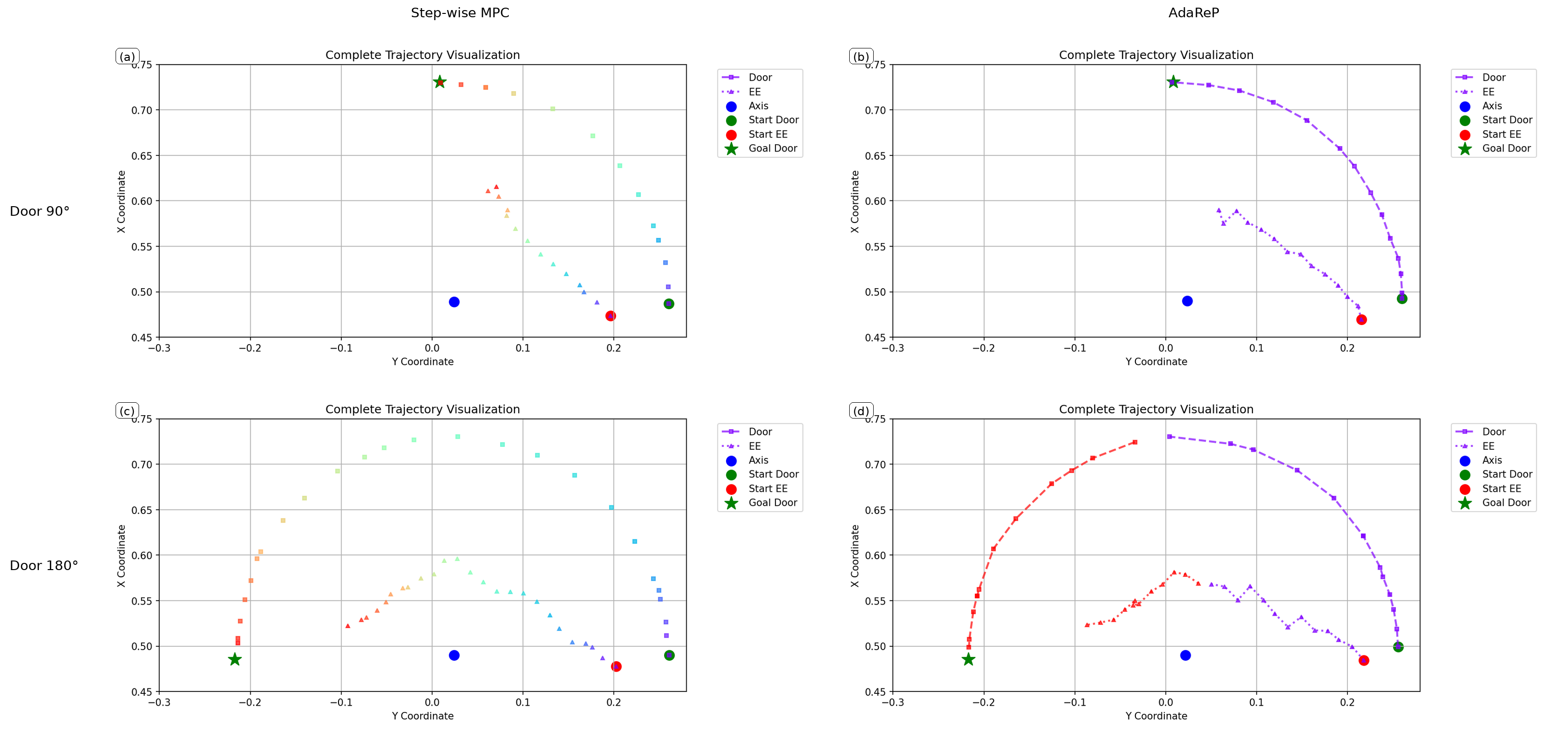}
    \caption{Real-robot door-opening trajectories. Rows correspond to door opening to $90^\circ$ and $180^\circ$, and columns compare step-wise MPC with AdaReP. Different colors indicate trajectory segments executed under different cached plans, so fewer color transitions visually indicate reduced replanning.}
    \label{fig:supp_franka_doors}
\end{figure}

The door tasks already exhibit the intended qualitative effect: the adaptive controller executes longer contiguous trajectory segments under the same cached plan. We subsequently visualize the longer-horizon T-block tasks, where the same reduction in plan switches is visible across translation, rotation, and the combined task.
\begin{figure}[H]
    \centering
    \includegraphics[width=0.90\linewidth]{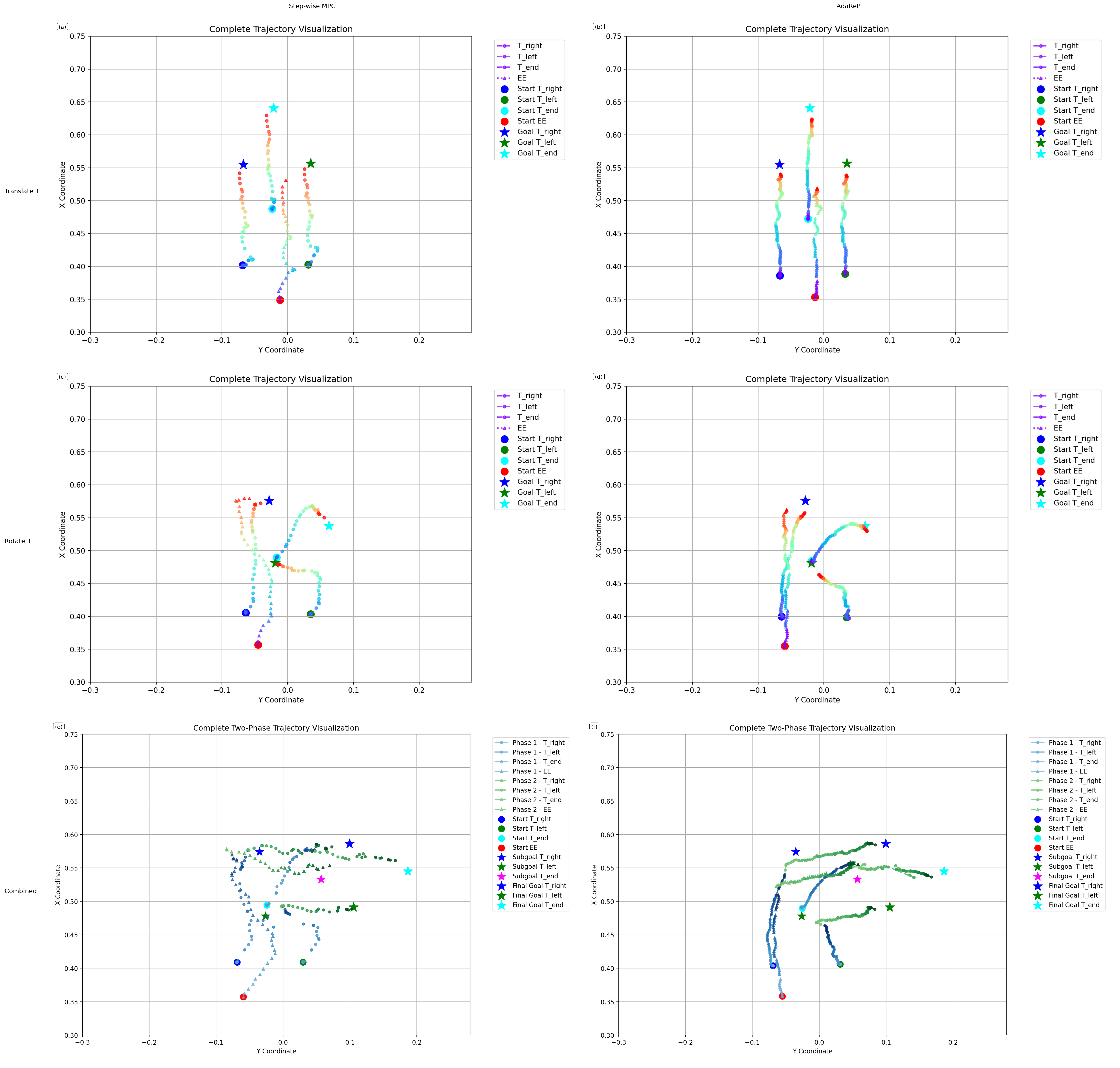}
    \caption{Real-robot T-block trajectories. Rows correspond to translation, rotation, and the combined task, and columns compare step-wise MPC with AdaReP. The qualitative reduction in plan switches mirrors the NFE savings reported quantitatively in the main text.}
    \label{fig:supp_franka_tblocks}
\end{figure}

\FloatBarrier
\subsection{Visual Demonstrations from Physical Rollouts}
In addition to the traced state-space trajectories above, we include representative frame sequences from the physical rollouts to show the robot--object interaction directly in image space (\cref{fig:supp_real_vis_doors,fig:supp_real_vis_tblocks}). These strips cover the same five real-world task instances summarized later in \cref{tab:supp_real_robot}: door opening to $90^\circ$ and $180^\circ$, T-block translation, T-block rotation, and the combined translation-and-rotation task.
\begin{figure}[H]
    \centering
    {\small\textbf{Top:} opening the door to $90^\circ$\\[1mm]}
    \includegraphics[width=0.98\linewidth]{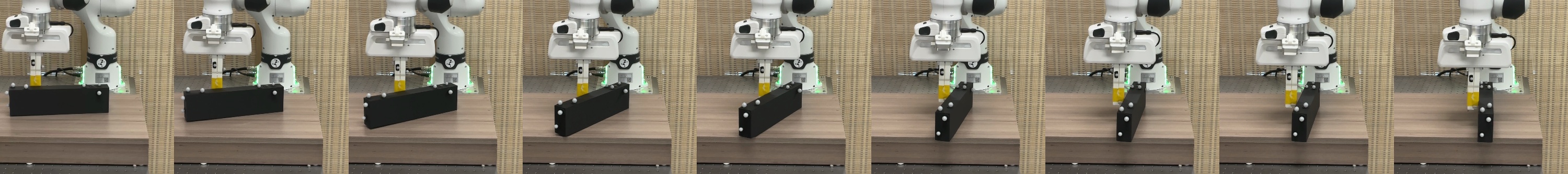}\\[1mm]
    {\small\textbf{Bottom:} opening the door to $180^\circ$\\[1mm]}
    \includegraphics[width=0.98\linewidth]{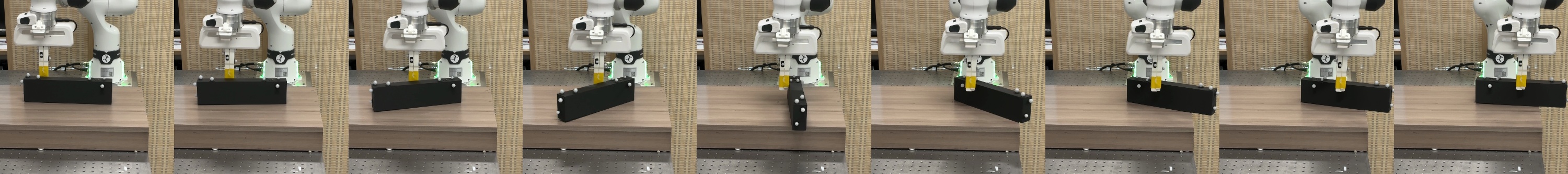}
    \caption{Representative frame sequences from the real-robot door-opening tasks. Each strip shows selected frames from a physical rollout and complements the state-space trajectory visualizations in \cref{fig:supp_franka_doors}.}
    \label{fig:supp_real_vis_doors}
\end{figure}

\begin{figure}[H]
    \centering
    {\small\textbf{Top:} translating the T-block\\[1mm]}
    \includegraphics[width=0.98\linewidth]{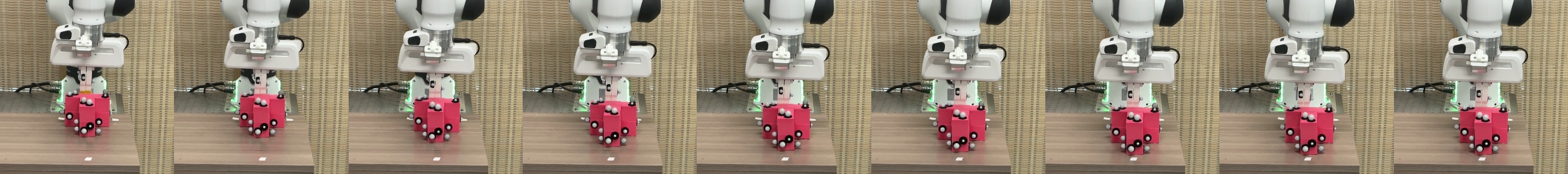}\\[1mm]
    {\small\textbf{Middle:} rotating the T-block\\[1mm]}
    \includegraphics[width=0.98\linewidth]{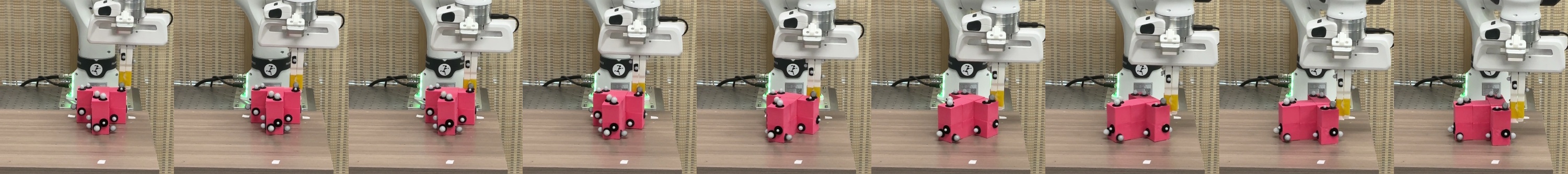}\\[1mm]
    {\small\textbf{Bottom:} translating and rotating the T-block\\[1mm]}
    \includegraphics[width=0.98\linewidth]{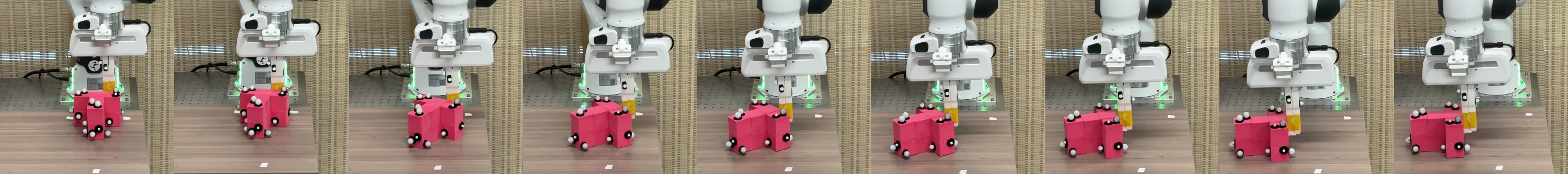}
    \caption{Representative frame sequences from the real-robot T-block tasks. These strips show the translation, rotation, and combined manipulation behaviors corresponding to the trajectory plots in \cref{fig:supp_franka_tblocks}.}
    \label{fig:supp_real_vis_tblocks}
\end{figure}

\FloatBarrier
\subsection{Per-task Real-World Results}
Table \ref{tab:supp_real_robot} records the exact task-instance counts and episode-level mean NFE values used to form the aggregate Franka summaries in the main text.
\begin{table}[H]
    \centering
    \caption{Exact task-instance outcomes for the Franka evaluation. Each task instance is evaluated for 10 trials. NFE values are the episode-level means used to compute the aggregate summaries in the main text.}
    \label{tab:supp_real_robot}
    \small
    \renewcommand{\arraystretch}{1.02}
    \begin{tabular}{>{\raggedright\arraybackslash}p{2.6cm}>{\centering\arraybackslash}p{1.45cm}>{\centering\arraybackslash}p{1.45cm}>{\centering\arraybackslash}p{1.55cm}>{\centering\arraybackslash}p{1.55cm}}
        \toprule
        Task instance & $\MPC_k^1$ success & \method success & $\MPC_k^1$ NFE & \method NFE \\
        \midrule
        Open door $90^\circ$ & $9/10$ & $9/10$ & $159\mathrm{K}$ & $14\mathrm{K}$ \\
        Open door $180^\circ$ & $6/10$ & $7/10$ & $198\mathrm{K}$ & $24\mathrm{K}$ \\
        Translate T & $8/10$ & $8/10$ & $345\mathrm{K}$ & $60\mathrm{K}$ \\
        Rotate T & $6/10$ & $7/10$ & $407\mathrm{K}$ & $64\mathrm{K}$ \\
        Combination & $5/10$ & $5/10$ & $834\mathrm{K}$ & $115\mathrm{K}$ \\
        \bottomrule
    \end{tabular}
\end{table}

\smallskip
\noindent\textit{Aggregate summary.} Aggregated over the 50 physical trials, the step-wise baseline achieves $34$ successes out of $50$ trials (success rate $0.68$) and AdaReP achieves $36$ successes out of $50$ trials (success rate $0.72$); the corresponding Wilson 95\% intervals are approximately $[0.54,0.79]$ and $[0.58,0.83]$. As noted in the main text, these counts should be read as descriptive evidence of a stable computation--performance trade-off rather than as a high-powered significance test.

\FloatBarrier

\end{document}